\documentclass[journal]{IEEEtran}
\IEEEoverridecommandlockouts
\usepackage{cite}
\usepackage{amsmath,amssymb,amsfonts,amsthm}
\usepackage{enumitem}
\usepackage{algorithm, algpseudocode}
\usepackage{booktabs, multirow}
\newtheoremstyle{italic_lemma}
  {0pt}   
  {0pt}   
  {\itshape}  
  {\parindent} 
  {\itshape}  
  {:}         
  {.5em}      
  {\itshape \thmname{#1}\thmnumber{ #2}\thmnote{ (#3)}} 
\theoremstyle{italic_lemma}
\newtheorem{lemma}{Lemma}
\newtheorem{theorem}{Theorem}
\newtheorem{assumption}{Assumption}
\newtheorem{corollary}{Corollary}

\usepackage{graphicx}
\usepackage{subcaption}
\usepackage{textcomp}
\usepackage{xcolor}
\def\BibTeX{{\rm B\kern-.05em{\sc i\kern-.025em b}\kern-.08em
    T\kern-.1667em\lower.7ex\hbox{E}\kern-.125emX}}

\usepackage{array}
\usepackage{stfloats}
\usepackage{url}
\usepackage{verbatim}
\usepackage{balance}
\usepackage{makecell}
\usepackage[hidelinks]{hyperref}

\newcommand{\norm}[1]{\left\lVert#1\right\rVert}
\newcommand{\bs}{\boldsymbol}

\def\wkc{\bs{w}_{k,\mathrm{c}}}
\def\wks{\bs{w}_{k,\mathrm{s}}}
\def\wc{\bs{w}_{\mathrm{c}}}
\def\ws{\bs{w}_{\mathrm{s}}}
\def\dc{d_\mathrm{c}}
\def\ds{d_\mathrm{s}}
\def\qc{q_\mathrm{c}}
\def\qs{q_\mathrm{s}}
\def\qu{q_\mathrm{u}}
\def\qmax{q_{\max}}
\def\Qc{Q_\mathrm{c}}
\def\Qs{Q_\mathrm{s}}
\def\lc{\ell_\mathrm{c}}

\def\pkc{p_{k,\mathrm{c}}}
\def\ps{p_{\mathrm{s}}}
\def\Skc{S_{k,\mathrm{c}}}
\def\Emac{E_{\mathrm{MAC}}}
\def\El{E_{\mathrm{l}}}
\def\Em{E_{\mathrm{m}}}
\def\Ed{E_{\mathrm{d}}}
\def\Mc{M_{\mathrm{c}}}
\def\Oc{O_{\mathrm{c}}}
\def\Ms{M_{\mathrm{s}}}
\def\Os{O_{\mathrm{s}}}
\def\Efpkc{E^{\mathrm{FP}}_{k,\mathrm{c}}}
\def\Ebpkc{E^{\mathrm{BP}}_{k,\mathrm{c}}}
\def\Eckc{E^{\mathrm{C}}_{k,\mathrm{c}}}
\def\Eakc{E^{\mathrm{A}}_{k,\mathrm{c}}}
\def\Ewkc{E^{\mathrm{W}}_{k,\mathrm{c}}}
\def\Edramkc{E^{\mathrm{DRAM}}_{k,\mathrm{c}}}
\def\Efpks{E^{\mathrm{FP}}_{k,\mathrm{s}}}
\def\Ebpks{E^{\mathrm{BP}}_{k,\mathrm{s}}}
\def\Edramks{E^{\mathrm{DRAM}}_{k,\mathrm{s}}}
\def\rku{r_k^\mathrm{u}}
\def\rkd{r_k^\mathrm{d}}
\def\rb{r^\mathrm{b}}

\def\Eku{E_k^\mathrm{U}}
\def\Eb{E^\mathrm{B}}
\def\tka{\tau_k^\mathrm{A}}
\def\tkg{\tau_k^\mathrm{G}}
\def\tku{\tau_k^\mathrm{U}}
\def\tb{\tau^\mathrm{B}}
\def\tfpkc{\tau_{k,\mathrm{c}}^\mathrm{FP}}
\def\tfps{\tau_{\mathrm{s}}^\mathrm{FP}}
\def\tbpkc{\tau_{k,\mathrm{c}}^\mathrm{BP}}
\def\tbps{\tau_{\mathrm{s}}^\mathrm{BP}}
\def\wkt{\bs{w}_k^t}
\def\wkto{\bs{w}_k^{t+1}}
\def\vkt{\bs{v}_k^{t}}
\def\vkto{\bs{v}_k^{t+1}}
\def\wktq{\bs{\tilde{w}}_{k}^{t}}

\def\tp{\mathsf{T}}
\def\itp{\mathsf{\scriptscriptstyle T}}

\newcommand{\sqb}[1]{\left[#1\right]}
\newcommand{\paren}[1]{\left(#1\right)}
\newcommand{\curly}[1]{\left\{#1\right\}}
\newcommand{\floor}[1]{\lfloor #1 \rfloor}
\newcommand{\expec}[1]{\mathbb{E}\sqb{#1}}
\newcommand{\iprod}[2]{\left\langle#1\,,\,#2\right\rangle}

\let\originalleft\left
\let\originalright\right
\renewcommand{\left}{\mathopen{}\mathclose\bgroup\originalleft}
\renewcommand{\right}{\aftergroup\egroup\originalright}

\begin{document}

\title{GQ-FSL: Green Quantized Federated Split Learning Framework for Wireless Edge Networks}

\author{\IEEEauthorblockN{Idan Roth and Lutz Lampe}\\
\IEEEauthorblockA{Department of Electrical and Computer Engineering, \\The University of British Columbia,  Vancouver, BC, Canada, \{idanroth, Lampe\}@ece.ubc.ca}
\thanks{This work is supported by the Natural Sciences and Engineering Research Council of Canada (NSERC) and the NATO Science for Peace and Security Programme (G7646). Part of this work has been accepted for presentation at the 2026 IEEE International Workshop on Signal Processing and Artificial Intelligence in Wireless Communications (SPAWC)~\cite{gqfsl_spawc2026}. This work has been submitted to the IEEE for possible publication. Copyright may be transferred without notice, after which this version may no longer be accessible.}
}

\maketitle

\begin{abstract}
Deploying state-of-the-art deep neural networks (DNNs) at the wireless edge is severely bottlenecked by the strict energy and resource constraints of mobile devices. Although federated split learning (FSL) alleviates on-device computational burdens by offloading workloads to an edge server, this may introduce systemic overheads, while the continuous exchange of intermediate activations, gradients, and submodels still incurs significant energy consumption (EC). To address this, we propose a green quantized FSL (GQ-FSL) framework that incorporates stochastic quantization for both local collaborative training and wireless transmissions. Notably, GQ-FSL supports asymmetric precision levels for the client- and server-side submodels, effectively decoupling device energy constraints from global convergence degradation. To quantify these tradeoffs, we develop parameterized energy models for the split architecture and derive a theoretical convergence bound under statistically heterogeneous data. Building on that, we formulate a joint optimization problem to configure the DNN split point and precision levels, minimizing the total system EC while satisfying strict latency and target accuracy constraints. Ultimately, we demonstrate that GQ-FSL enables large-scale DNN deployment on resource-constrained devices, achieving superior energy efficiency compared to quantized federated learning and full-precision FSL. 
\end{abstract}

\vspace{-1.15mm}
\begin{IEEEkeywords}
Federated split learning, quantized neural networks, energy efficiency, wireless edge networks. 
\end{IEEEkeywords}

\section{Introduction}

 \IEEEPARstart{W}{ith} the ubiquitous deployment of Internet of Things (IoT) and mobile devices, their increasing sensing and computational capabilities generate massive volumes of data that serve as the foundation for data-driven artificial intelligence (AI) models \cite{zhu2020toward_int_edge,letaief2021edge_ai_6g}. Conventional centralized learning entails gathering and processing this raw data in the cloud. However, latency and backhaul bandwidth limitations have driven a paradigm shift towards edge computing. 
 By integrating cloud-like functionalities within the radio access network (RAN), multi-access edge computing (MEC) \cite{Porambage2018_mec} enables a close synergy with AI, giving rise to the edge AI paradigm. This paradigm pushes inference and training processes towards the network edge in close proximity to data sources, transforming wireless networks from ``connected things'' to ``connected intelligence'' \cite{zhou2019edge_paving, shi2020communication_eff_edgeai,letaief2021edge_ai_6g}. 
Nevertheless, the widespread adoption of AI-capable devices and the increasing complexity of learning tasks will significantly increase energy consumption (EC) within the RAN. 
Consequently, addressing the energy cost of edge AI workloads is a fundamental imperative for the sustainability of next-generation wireless networks \cite{schwartz2020green_ai, mao2024green_edge, thakur2025green_fl}. 


As a cornerstone of edge AI, federated learning (FL) \cite{mcmahan2017communication, lim2020edgefederated, niknam2020wirelessfederated} has become a fundamental framework for distributed model training. It enables edge devices to collaboratively train a global model using locally stored data, sharing only model updates with a nearby server to preserve privacy.
However, state-of-the-art AI models typically employ deep neural networks (DNNs) requiring tens of millions of parameters and billions of multiply-accumulate (MAC) operations \cite{schwartz2020green_ai,kim2023greenfl,lin2024efficient_psl}. Since FL relies exclusively on full on-device training, deploying such complex models imposes an immense computational burden and severe memory requirements. Furthermore, its iterative nature demands frequent communication rounds to synchronize these massive models over the wireless channel. Given the resource-constrained nature of IoT and mobile devices, accommodating these intense overheads often renders standard FL impractical \cite{lin2024efficient_psl, lin2024split_learn_6G}. 
To alleviate these on-device burdens, split learning (SL) \cite{vepakomma2018split_learning} partitions a DNN at a specific cut-layer into a lightweight client-side submodel and a computationally intensive server-side submodel. The edge client executes only the initial layers and transmits intermediate activations (and labels) to the server, which completes the forward propagation (FP) and initiates backpropagation (BP). The server then returns the cut-layer gradients to the client to complete the local update.
While SL reduces on-device complexity, its standard, sequential client-by-client execution yields excessive training latency \cite{lin2024split_learn_6G, thapa2022splitfed}.
Federated split learning (FSL) \cite{thapa2022splitfed} overcomes this limitation by enabling concurrent, parallel training across multiple devices, establishing the split-based framework as a robust alternative to FL \cite{lin2024efficient_psl, lin2024split_learn_6G}. Nevertheless, FSL retains significant system overheads. 
A resource-limited edge server can be quickly overwhelmed by the aggregated client workloads \cite{lin2024efficient_psl}, and transferring cut-layer data may introduce a large volume of wireless exchange \cite{shao2021communication_computation_tradeoff}. Furthermore, a potentially heavy client-side submodel still imposes intense local processing demands, a substantial memory footprint, and periodic model synchronization overheads. Altogether, the resulting high EC and latency costs pose a fundamental challenge in deploying FSL at the wireless edge while striving to achieve green, sustainable edge AI \cite{thakur2025green_fl}.

Model compression techniques have attracted significant attention for facilitating efficient deep learning \cite{cheng2017survey_compression, mao2024green_edge, thakur2025green_fl}. As a prominent compression strategy, model quantization offers a direct mechanism to mitigate the aforementioned overheads and improve the overall energy footprint, thereby serving as a key enabler for deploying green FSL frameworks on resource-constrained edge devices.
By reducing the numerical precision of DNN parameters and activations to lower-bit formats \cite{hubara2018quantized_nn}, quantization simultaneously lowers local computational complexity, reduces memory requirements, and shrinks the communication payload. However, low-precision representations introduce quantization errors that degrade the convergence rate. Consequently, identifying an appropriate precision level that balances energy efficiency and convergence performance remains a key challenge \cite{kim2023greenfl}. Notably, this balance must explicitly account for the non-trivial tradeoff between EC and training latency, as quantization inherently alters both the per-round execution time and the total number of required rounds for convergence.

\subsection{Related Works}
Many research efforts have optimized wireless edge networks for energy-efficient FL deployment. Such approaches, for example, improve standard FL efficiency by jointly managing communication and computation resources, while characterizing the tradeoff between EC and training latency \cite{tran2019federated_tradeoff, yang2020energy_efficient_fl, luo2021cost_effective_fl, zeng2021energy_efficient_gpu_cpu}, analyzing carbon footprints \cite{savazzi2022energy_carbon_footprint}, and deploying client-edge-cloud hierarchical architectures \cite{liu2020client_edge_cloud}. While effective, these approaches fundamentally rely on the assumption that edge devices possess the memory and computational capacities to train the entire model locally.
Recent studies have investigated efficient designs specifically for split-based architectures. 
By eliminating client-side model exchanges, parallel SL frameworks balance EC and training time through optimized cut-layer selection \cite{kim2023bargaining_psl}, or minimize latency by shrinking overheads via last-layer gradient aggregation alongside the joint optimization of subchannels, power, and cut-layer selection \cite{lin2024efficient_psl}. To address device heterogeneity in FSL, \cite{xu2023accelerating_fsl} optimizes individual split points and bandwidth allocation to minimize latency, while \cite{zhu2024esfl} balances client-side workloads and server computing resources. Targeting the tradeoff between EC and training latency in FSL, \cite{samikwa2022ares} adapts device-specific splitting points based on time-varying network and computing conditions, an approach later expanded by \cite{samikwa2024dfl} through dynamic device clustering to handle data heterogeneity. Additionally, \cite{ao2024federated_FSL_wanabe} jointly optimizes power allocation, device scheduling, and cut-layer selection to minimize a weighted sum of per-round latency and EC under heterogeneous communication and computing constraints. 
However, despite system-level efficiency and computation offloading, these frameworks fail to resolve the fundamental bottlenecks discussed earlier. The processing and communication of full-precision parameters and cut-layer data continue to severely exacerbate EC and training latency. Crucially, by treating the DNN's numerical precision as a fixed constraint, these approaches overlook the substantial energy savings achievable through lower-bit representations.

To alleviate the stringent constraints of edge AI, model quantization has primarily been explored to overcome communication overheads. For instance, \cite{zhu2020one_bit_ota} analyzes the convergence of a one-bit gradient quantization scheme for over-the-air FL, while the FL framework in \cite{reisizadeh2020fedpaq} combines uploading quantized model updates with periodic averaging and partial device participation. To minimize training time, \cite{liu2022training_grad_quant} jointly optimizes gradient quantization levels and bandwidth allocation. Meanwhile, \cite{feng2021design_quant_device} balances communication and computation costs through optimized uplink model compression design and device selection strategies. In SL, \cite{oh2025communication_dropout_quant} compresses intermediate activations using quantization and adaptive dropout, although its reliance on sequential training induces significant latency. Ultimately, these works prioritize communication efficiency, largely overlooking the substantial storage requirements, computational burden, and EC incurred by executing high-precision local training on resource-constrained devices.

A limited number of studies have extended quantization to simultaneously address energy-efficient communication, storage, and computation during training. For example, \cite{yang2021communication_bnn_fl} employs binary neural networks in FL to reduce computation and communication costs, but lacks a mechanism to optimize precision levels for balancing model accuracy and EC. Addressing this, \cite{chen2022energy_qfl} minimizes energy by jointly optimizing the number of local iterations, bandwidth, and quantization precision. However, it enforces identical precision levels for local training and transmission and relies on full device participation, making it vulnerable to stragglers. Building on this, \cite{kim2023greenfl} develops a quantized FL framework utilizing quantized neural networks (QNNs) with distinct precision levels for computation and transmission. 
While they analyze the tradeoff between energy efficiency and convergence rate, optimizing purely for convergence rounds fails to capture the true physical training latency. 
More fundamentally, all these frameworks strictly rely on full on-device training, imposing considerable computational and memory burdens that are often infeasible for resource-constrained edge devices. As a result, while quantization has proven effective in standard FL, its integration into FSL remains largely unexplored, particularly regarding the interplay between quantization precision levels and splitting point selection, alongside their collective impact on the energy-latency tradeoff and convergence behavior.

\subsection{Contributions and Organization}
Motivated by these open challenges, this paper investigates a QNN-based FSL framework designed for energy-efficient collaborative training on resource-constrained devices. The major contributions of this paper are summarized as follows: 

\begin{itemize}[leftmargin=5.8mm]
    \item We introduce a green quantized FSL (GQ-FSL) framework that partitions a DNN into client- and server-side submodels and integrates stochastic quantization into both local training and wireless transmissions. By supporting distinct precision levels for client and server QNNs alongside the client-side submodel upload, GQ-FSL substantially reduces computational load, memory accesses, and communication overhead to minimize total EC.

    \item To accurately quantify the deployment costs, we derive split-based parameterized models for the EC and latency of QNN-based local training on a generic DNN processing platform, alongside comprehensive models for wireless uplink and downlink transmissions. 
    
    \item We establish a rigorous theoretical foundation by deriving a convergence bound that explicitly captures the impact of stochastic quantization under statistically heterogeneous data distributions. This formally quantifies the fundamental tradeoff between energy savings and quantization-induced convergence degradation. Our derivation refines prior QNN-based FL analysis \cite{kim2023greenfl} to ensure a strictly valid and tighter error bound, providing a unified theoretical guarantee for both standard quantized FL and FSL.

    \item Building on these derivations, we formulate a joint optimization problem to strategically configure the splitting point, precision levels, client participation, and local iterations. This formulation minimizes the total system EC subject to target-accuracy and latency constraints. Our analysis reveals fundamental design insights regarding the inherent tradeoff between energy efficiency and training latency. Notably, we demonstrate a highly advantageous dynamic: theoretical convergence deceleration does not strictly dictate the physical training time. While reduced precision levels and limited client participation degrade the convergence rate to achieve substantial energy savings, the simultaneous accelerated per-round computations and transmissions actually compress the overall training latency up to a threshold of diminishing returns. 
    
    \item Simulation results confirm that GQ-FSL facilitates the deployment of large-scale DNNs on resource-constrained devices, while achieving superior energy efficiency and latency management compared to standard quantized FL and full-precision FSL architectures.

\end{itemize}

The remainder of this paper is organized as follows. Section~\ref{sec:framework} elaborates on the proposed GQ-FSL framework, while Section~\ref{sec:energy_latency} develops the associated energy and latency models. In Section~\ref{sec:problem_formulation}, we formulate the energy minimization problem of the framework, analyze the convergence behavior, and outline the solution approach. Section~\ref{sec:results} discusses the simulation results, and Section~\ref{sec:conclusion} provides concluding remarks.

\section{Proposed GQ-FSL Framework} \label{sec:framework}
In this section, we formulate the proposed GQ-FSL framework. We first establish the standard FSL system architecture and its global objective. Next, we introduce the mathematical foundations of the stochastic quantization scheme. Building upon these paradigms, we then detail our novel QNN-based FSL joint architecture. By leveraging decoupled precision levels across the partitioned network, this joint approach provides the architectural flexibility to mitigate device constraints without severely degrading global convergence. Finally, we outline the iterative collaborative learning procedure executed across the wireless edge.

\subsection{FSL System Model}

\begin{figure}[t]
     \centering
     \includegraphics[width=\linewidth]{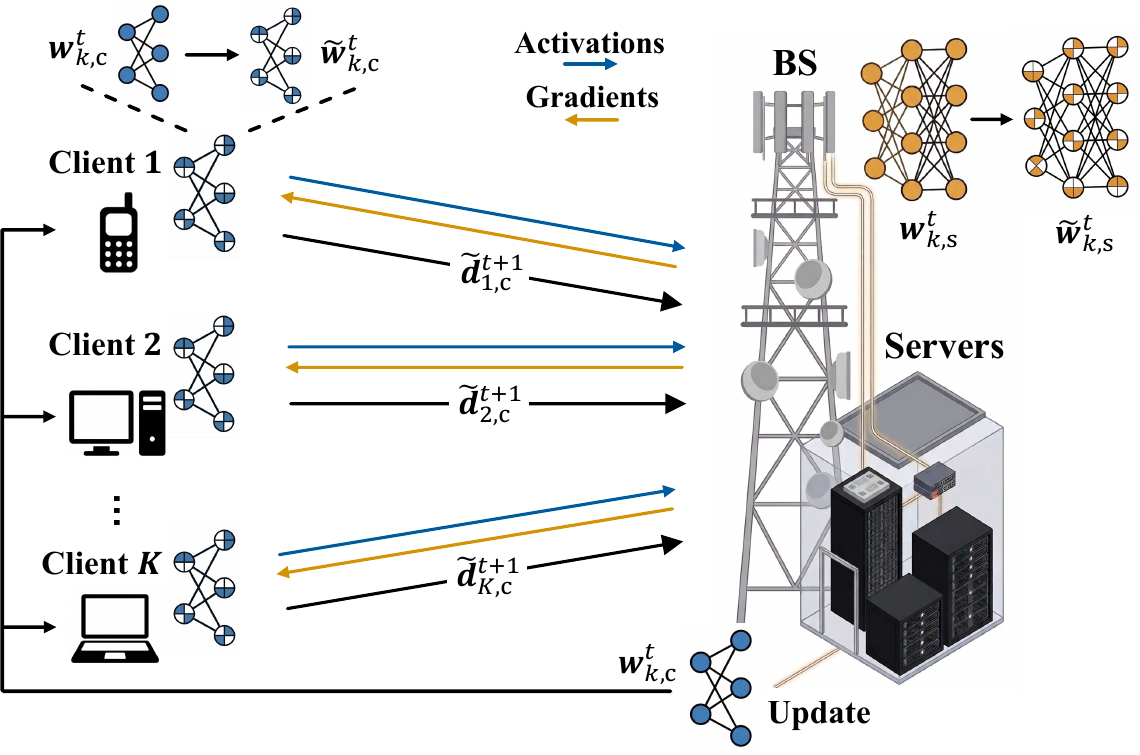}
     \caption{Illustration of GQ-FSL over wireless network.}
     \label{fig:system}
\end{figure}

We consider an FSL system deployed over a wireless edge network as shown in Fig.~\ref{fig:system}. The network consists of a base station (BS) that serves a set of $N$ resource-constrained clients (devices), denoted by $\mathcal{N} = \{1, 2, \dots, N\}$. Each client $k \in \mathcal{N}$ holds a local dataset $\mathcal{D}_k = \{\bs x_{kj},  y_{kj}\}$, where $j = 1, \dots, D_k$, and $\{\bs x_{kj}, y_{kj}\}$ is an input-output pair of a classification problem ($\bs x_{kj}$ being the input vector while $y_{kj}$ is the corresponding label). Therefore, the entire dataset $\mathcal{D}=\bigcup_{k=1}^N\mathcal{D}_k$ is of size $D=\sum_{k=1}^N D_k$. The datasets across devices are statistically heterogeneous, commonly referred to as non-independent and identically distributed (non-IID).
The BS is co-located with two distinct computational entities: a powerful edge server responsible for coordinating the split learning process, and a separate, non-colluding parameter server dedicated to model aggregation. Unlike FL, where each client independently trains the entire model $\bs {w}_k\in\mathbb{R}^d$ locally, FSL partitions the model into a client-side submodel $\wkc\in\mathbb{R}^{\dc}$ and a server-side submodel $\wks\in\mathbb{R}^{\ds}$. Here, $\dc$ and $\ds$ represent the number of parameters in the client- and server-side submodels, respectively, such that $d=\dc+\ds$ denotes the total number of parameters of the complete model $\bs{w}_k$.

We define a loss function $f\paren{\bs x_{kj}, y_{kj} \,;\bs w}$ to quantify the performance of a DNN with parameters $\bs w$ over the local data. Therefore, the local loss function for each client $k$ over its dataset is given by $F_k(\bs w) = \frac{1}{D_k}\sum_{j=1}^{D_k} f\paren{\bs x_{kj}, y_{kj} \,;\bs w}$. The global FSL objective is to find the global parameters $\bs{w}=\big[\wc^\itp, \ws^\itp\big]^\itp$ that minimize the total loss: 
\begin{equation}
    \min_{\bs w} F\paren{\bs w} = \min_{\bs w}\sum_{k=1}^N \frac{D_k}{D} F_k\paren{\bs w},
    \label{eq:fsl_opt}
\end{equation}
which enforces global consensus $\bs w_k = \bs w$ across all local models $\bs{w}_k=\big[\wkc^\itp, \wks^\itp\big]^\itp$, $k\in\mathcal{N}$.

Solving problem \eqref{eq:fsl_opt} typically requires an iterative process between the BS and the clients. In practical wireless networks and IoT systems, however, edge devices are constrained by limited computing and energy resources. While FSL alleviates the burden of executing the entire model locally, training DNNs with full-precision parameters still incurs substantial computational complexity and memory usage.




\subsection{Quantization Scheme}
\label{sec:proposed_qnn}

To overcome these bottlenecks, our framework employs model quantization to systematically reduce and manage the numerical precision of the DNN. Specifically, we represent the model parameters in a fixed-point format $[\Omega. \Phi]$, where $\Omega$ and $\Phi$ denote the integer and fractional parts, respectively \cite{gupta2015deep_fixedpoint}. By allocating one bit for the integer part (sign bit) and $\paren{q-1}$ bits for the fractional part, the quantization step size is $\kappa = 2^{-\paren{q-1}}$, yielding a dynamic range of $\sqb{-1, 1-\kappa}$. As standard in QNNs, weights are bounded within $\sqb{-1,1}$ to prevent excessive growth that offers no meaningful gain in quantized representation \cite{hubara2018quantized_nn}. 

We adopt a stochastic quantization scheme, which generally exhibits superior performance compared to deterministic quantization during training \cite{gupta2015deep_fixedpoint}. Any given continuous value $w\in\mathbb{R}$ can be stochastically quantized to a precision level $q$ as follows:
\begin{equation}
Q(w) = \begin{cases} \floor{w}, &    \text{with probability }\ \frac{\floor{w} + \kappa - w}{\kappa} \\ 
\floor{w} + \kappa, & \text{with probability }\ \frac{w - \floor{w}}{\kappa} 
\end{cases}, \label{eq:stoc_quan}
\end{equation}
where $\floor{w}$ is the largest integer multiple of $\kappa$ that is less than or equal to $w$. As in~\cite{gupta2015deep_fixedpoint}, $w$ values that exceed the dynamic range $\sqb{-1, 1-\kappa}$ are saturated to either the lower or the upper bound. We generalize the stochastic quantization model to a vector $\bs{w}=\sqb{w_1,...,w_d}^\itp \in \mathbb{R}^d$. Each element $w_i$ can be quantized independently with a distinct precision level $q_i$. Let $Q_{i}(\cdot)$ denote the stochastic quantization function in \eqref{eq:stoc_quan}, which is applied on $w_i$ with precision level $q_i$ and the corresponding $\kappa_i$.
Building upon the scalar quantization properties established in \cite{kim2023greenfl}, the mathematical properties of this stochastic quantization are given in the following lemma.

\begin{lemma} \label{lem:quatization}
    For a vector $\bs{w}=\sqb{w_1,...,w_d}^\itp\in\mathbb{R}^d$, and an element-wise vector stochastic quantization $Q(\bs{w}) = \sqb{Q_{1}(w_1), \dots, Q_{i}(w_i), \dots, Q_{d}(w_d)}^\itp$, we have
\end{lemma} \vspace{-4.5mm}
\begin{align}
    \mathbb{E}\sqb{Q_{i}(w_i)} = w_i,& \quad \mathbb{E}\sqb{(Q_{i}(w_i)-w_i)^2} \le \frac{1}{2^{2q_i}}, \label{eq:lemma1_scalar}\\
    \mathbb{E}\sqb{Q(\bs{w})} = \bs{w},& \quad \mathbb{E}\sqb{\norm{Q(\bs{w})-\bs{w}}^2} \le \sum_{i=1}^d \frac{1}{2^{2q_i}} \label{eq:lemma1_vector}.
\end{align}

\begin{IEEEproof}
    The scalar unbiasedness and variance bound in~\eqref{eq:lemma1_scalar} follow directly from the properties of stochastic quantization as proven in \cite{kim2023greenfl}. The vector unbiasedness in~\eqref{eq:lemma1_vector} 
    follows from its scalar counterpart. By using the scalar variance bound in~\eqref{eq:lemma1_scalar}, we obtain the generalized bound for the vector $\bs{w}$:
\begin{equation*}
    \mathbb{E}\sqb{\norm{Q(\bs{w})-\bs{w}}^2} \!=\! \sum_{i=1}^d \mathbb{E}\sqb{(Q_{i}(w_i)-w_i)^2} \le \sum_{i=1}^d \frac{1}{2^{2q_i}}. 
    \vspace{-1.5mm}
\end{equation*}
\end{IEEEproof}
 


\subsection{The Joint GQ-FSL Architecture} \label{sec:proposed_gqfsl}
Our GQ-FSL framework integrates the stochastic quantization scheme into the local training and wireless transmission processes of standard FSL. We employ a QNN architecture where weights and activations are quantized and represented in a fixed-point format rather than the conventional 32-bit floating-point format \cite{hubara2018quantized_nn, kim2023greenfl}. To further alleviate the computational and memory burdens on edge devices, the QNN is partitioned into two segments (submodels) at the cut layer, denoted by $\lc$, which serves as the network's splitting point.

The primary objective of the framework is to collaboratively train a partitioned QNN of a homogeneous architecture across all clients. In our proposed framework, we permit distinct quantization precision levels for the different segments of the QNN. The client-side submodel is quantized to a precision level of $\qc$ bits, while the server-side submodel operates at a precision level of $\qs$ bits. Furthermore, we apply a distinct precision level of $\qu$ bits specifically for quantizing the client-side submodel updates prior to uplink transmission for global aggregation. 
The splitting point $\lc$ and the precision levels $\qc$, $\qs$, and $\qu$ are kept identical across all clients and are fixed throughout the training process. Although higher accuracy can be achieved by increasing the precision levels, it comes at a cost of higher EC. This joint QNN-based FSL approach can significantly reduce the total EC associated with local computations, memory accesses, and wireless transmissions. Furthermore, it enables the deployment of large-scale DNNs on resource-constrained edge clients.

\subsubsection{Quantization Error Analysis}
From Lemma~\ref{lem:quatization}, we can establish the specific quantization error bounds for both the standard FL paradigm and our proposed FSL framework based on their respective precision assignments:
\begin{itemize}
    \item \emph{Standard FL} \cite{kim2023greenfl}: The entire model is trained locally using a uniform precision level, i.e., $q_i = q$ for all $i=1, \dots, d$. The error bound simplifies to:
    \begin{equation}
        \mathbb{E}[\norm{Q(\bs{w})-\bs{w}}^2] \le \frac{d}{2^{2q}}.
        \label{eq:fl_quant_err}
    \end{equation}
    
    \item \emph{Proposed FSL:} The model is partitioned into a client-side submodel $\bs{w}_\mathrm{c}\in \mathbb{R}^{\dc}$ and a server-side submodel $\bs{w}_\mathrm{s} \in \mathbb{R}^{\ds}$, quantized with precision levels $\qc$ and $\qs$, respectively. Let $\Qc(\cdot)$ and $\Qs(\cdot)$ denote the stochastic quantization functions, applied element-wise on $\bs{w}_\mathrm{c}$ and $\bs{w}_\mathrm{s}$, respectively. Therefore, the quantized weights can be written as $Q(\bs{w})=\sqb{\Qc\paren{\bs{w}_\mathrm{c}}^\itp, \Qs\paren{\bs{w}_\mathrm{s}}^\itp}^\itp$, and the error bound for the split architecture becomes:
    \begin{equation}
        \expec{\norm{Q(\bs{w})-\bs{w}}^2} \le \frac{\dc}{2^{2\qc}} + \frac{\ds}{2^{2\qs}}. \label{eq:fsl_quant_err}
    \end{equation}
\end{itemize}

Lemma~\ref{lem:quatization} demonstrates that our quantization scheme is unbiased. However, the quantization error variance remains bounded and dependent on the model dimension and the assigned precision level. Notably, \eqref{eq:fsl_quant_err} demonstrates that when the model is partitioned at early layers ($\dc \ll \ds$), the server strictly dominates the theoretical error bound. This yields a key structural advantage for GQ-FSL by decoupling device-side hardware limits from the overall quantization noise floor. Consequently, the framework can flexibly allocate precision between the client and server without severely compromising convergence. For example, raising $\qc$ at minimal energy cost to compensate for reduced server-side precision that drives substantial energy savings.

Next, we detail the specific local update process employed for the partitioned QNN.

\subsubsection{QNN Local Update Mechanism}
 Training the partitioned QNN requires maintaining full-precision  weights at both the clients and the edge server. For client $k$, we denote the full-precision weights of layer $l$ as $\bs{w}_k^{(l)}$, and the total number of layers is represented by $\ell$. The DNN is partitioned at the cut layer $\lc\in\{1,\dots,\ell\}$ such that the client-side submodel $\wkc$ comprises $\lc$ layers and the server-side submodel $\wks$ comprises $\ell-\lc$ layers. During the FP, the clients and server use quantized weights and activations corresponding to their respective precision levels ($\qc$ and $\qs$). Therefore, the quantized weights of the $l$-th layer are given by $\bs{\tilde{w}}_{k}^{(l)}=\Qc(\bs{w}_k^{(l)})$ for $l=1,\dots,\lc$, and $\bs{\tilde{w}}_{k}^{(l)}=\Qs(\bs{w}_k^{(l)})$ for $l=\lc+1,\dots,\ell$. Then, the output activations of layer $l$ are computed as: $ o^{(l)} = g^{(l)}(\bs{\tilde{w}}_{k}^{(l)}, o^{(l-1)})$, where $o^{(l-1)}$ is the output from the preceding layer, and $g^{(l)}(\cdot)$ encompasses the operations of layer $l$, including the linear transformation, normalization, and activation function. In particular, the activation function output is bounded to the QNN dynamic range and subjected to stochastic quantization before $o^{(l)}$ is fed into the subsequent layer as input. At the cut layer $\lc$, the quantized intermediate activations $o^{(\lc)}$ are transmitted over the wireless link to the edge server to complete the forward pass.

For the backward pass, BP is performed in full precision to effectively average out the stochastic gradient (SG) noise \cite{hubara2018quantized_nn}. Cut-layer gradients are returned to the client unquantized, and both entities update their respective full-precision weights $\bs{w}_k^{\tau}$ using mini-batch stochastic gradient descent (SGD), where $\tau$ is the current local iteration. 
Following the update step, the weights are clipped to the $[-1, 1]$ range as $\bs{w}_k^{\tau} \leftarrow \mathrm{clip}\paren{\bs{w}_k^{\tau}, -1, 1}$, where $\mathrm{clip}\paren{x, -1, 1}$ projects values larger than $1$ to $1$, values smaller than $-1$ to $-1$, and otherwise returns $x$. This ensures the updated full-precision weights remain within the valid dynamic range of the QNN. Finally, these clipped full-precision weights are re-quantized to produce $\bs{\tilde{w}}_{k}^{\tau}$ for the subsequent FP.

\subsection{GQ-FSL Learning Procedure}
The collaborative training process in our GQ-FSL framework proceeds in global communication rounds. Before training begins, the edge server splits the global model into client- and server-side submodels. At each global round $t$, the learning procedure follows these sequential steps:

\subsubsection{Client Selection and Model Distribution}
The BS selects a subset of $K$ clients, with heterogeneous computational capabilities, from the total $N$ available devices to participate in the current training round. We denote the set of participating clients at global round $t$ as $\mathcal{N}_t \subseteq \mathcal{N}$. 
The BS broadcasts the current global client-side submodel $\bs{w}_\mathrm{c}^{t}$ in full precision to the scheduled clients over the downlink channel.
Upon reception, each participating client $k \in \mathcal{N}_t$ initializes its local full-precision weights for the first local iteration as $\bs{w}_{k,\mathrm{c}}^{t,0} \leftarrow \bs{w}_\mathrm{c}^{t}$, while the edge server concurrently initializes a dedicated server-side submodel for each client as $\bs{w}_{k,\mathrm{s}}^{t,0} \leftarrow \bs{w}_\mathrm{s}^{t}$. 

\subsubsection{Local Split Training}
Following the distribution phase, each participating client $k \in \mathcal{N}_t$ collaborates with the edge server to perform local split training, where they execute $I$ local iterations of the forward and backward passes detailed in Section~\ref{sec:proposed_gqfsl}. Specifically, at each local iteration $\tau$, the client processes a mini-batch of data $\xi_k^{\tau}$ sampled from $\mathcal{D}_k$ and transmits the quantized intermediate activations, alongside the corresponding labels, to the edge server\footnote{While inherent to split-based frameworks, label sharing introduces potential privacy risks for sensitive data. Privacy-enhancing technologies like differential privacy can be integrated into our framework, though analyzing their impact remains an area for future work \cite{pham2024enhancingdp}.}. The server completes the FP, calculates the loss, and initiates BP by returning the cut-layer gradients to the client. Both entities then update their associated portion of the full-precision weights using the mini-batch SGD. Overall, at global round $t$ we have
\begin{equation}
    \bs{w}_k^{t,\tau} = \bs{w}_k^{t,\tau-1} - \eta_t \nabla F_k\paren{\bs{\tilde{w}}_{k}^{t,\tau-1},\xi_k^{\tau}}, \ \tau=1,...,I, 
\end{equation}
with
\begin{equation}
     \bs{w}_k^{t,\tau}=\begin{bmatrix}
                \bs{w}_{k,\mathrm{c}}^{t,\tau} \vspace{0.2mm}\\
                \bs{w}_{k,\mathrm{s}}^{t,\tau}
             \end{bmatrix}, \quad
             \bs{\tilde{w}}_{k}^{t,\tau}=\begin{bmatrix}
                \bs{\tilde{w}}_{k,\mathrm{c}}^{t,\tau} \vspace{0.2mm} \\
                \bs{\tilde{w}}_{k,\mathrm{s}}^{t,\tau}
             \end{bmatrix} = \begin{bmatrix}
                \Qc(\bs{w}_{k,\mathrm{c}}^{t,\tau}) \vspace{0.2mm} \\
                \Qs(\bs{w}_{k,\mathrm{s}}^{t,\tau})
             \end{bmatrix},     
\end{equation}
where $\eta_t$ is the corresponding learning rate.

\subsubsection{Model Global Aggregation} 
After completing $I$ local iterations, the updated local models must be aggregated to form the new global model for round $t+1$. We adopt FedAvg \cite{mcmahan2017communication} to aggregate the multiple local models associated with each client. 
Prior to uplink transmission, each client $k \in \mathcal{N}_t$ calculates the submodel update (also known as weight update/differential) $\bs{d}_{k,\mathrm{c}}^{\,t+1}= \bs{w}_{k,\mathrm{c}}^{t+1}-\bs{w}_{k,\mathrm{c}}^{t}$, where $\bs{w}_{k,\mathrm{c}}^{t+1}=\bs{w}_{k,\mathrm{c}}^{t,I}$ and $\bs{w}_{k,\mathrm{c}}^{t}=\bs{w}_{k,\mathrm{c}}^{t,0}$ \cite{zheng2020design}. Since $\bs{d}_{k,\mathrm{c}}^{\,t+1}$ typically has millions of elements even for a partial DNN, it is not practical to transmit it with full precision for energy-constrained devices. As in \cite{kim2023greenfl}, we therefore apply the quantization scheme introduced in Section~\ref{sec:proposed_qnn}, using a transmission precision level $\qu$ to get its quantized equivalent $\bs{\tilde{d}}_{k,\mathrm{c}}^{\raisebox{-0.6ex}{$\scriptstyle \, t+1$}}=Q_\mathrm{u}(\bs{d}_{k,\mathrm{c}}^{\,t+1})$ which is transmitted to the BS. Note that prior to quantization, the $k$-th client normalizes the model update of round $t$ as $\bs{d}_{k,\mathrm{c}}^{\,t+1}/ \|\bs{d}_{k,\mathrm{c}}^{\,t+1}\|_{\infty}$ to match the predetermined quantization range $\sqb{-1,1}$.
Upon reception of all quantized client submodel updates, the parameter server re-normalizes them to their true magnitude, and aggregates the results using the FedAvg algorithm to generate the updated global client-side submodel.
Concurrently, since the server-side submodels never leave the edge server, they are aggregated directly in full precision without undergoing transmission quantization using $\bs{d}_{k,\mathrm{s}}^{\,t+1}= \bs{w}_{k,\mathrm{s}}^{t+1}-\bs{w}_{k,\mathrm{s}}^{t}$.
Overall, the parameter server generates the next global model $\bs{w}^{t+1}$ as
\begin{equation}
    \bs{w}^{t+1} = \bs{w}^t + \frac{1}{K}\sum_{k \in \mathcal{N}_t} \bs{\tilde{d}}_{k}^{\raisebox{-0.6ex}{$\scriptstyle \, t+1$}}, \label{eq:model_agg}
\end{equation}
where 
\begin{equation}
    \bs{w}^t = 
    \begin{bmatrix}
        \bs{w}_\mathrm{c}^t \\
        \bs{w}_\mathrm{s}^t
    \end{bmatrix}, 
    \quad
    \bs{\tilde{d}}_{k}^{\raisebox{-0.6ex}{$\scriptstyle \, t+1$}} = \begin{bmatrix}
        \bs{\tilde{d}}_{k,\mathrm{c}}^{\raisebox{-0.6ex}{$\scriptstyle \, t+1$}} \vspace{0.2mm} \\
        \bs{d}_{k,\mathrm{s}}^{\,t+1}
    \end{bmatrix},
\end{equation}
and clips it to the $[-1,1]$ range to ensure valid full-precision weights for the subsequent round.

This iterative process of model distribution, split training, and aggregation repeats until the global loss function converges to a predefined target accuracy $\epsilon$. With the operational mechanics of the GQ-FSL framework formally established, the subsequent analysis models the physical resource footprint required to execute these computational and transmission phases across the resource-constrained wireless edge.

\begin{figure}[t]
     \centering
     \includegraphics[width=\linewidth]{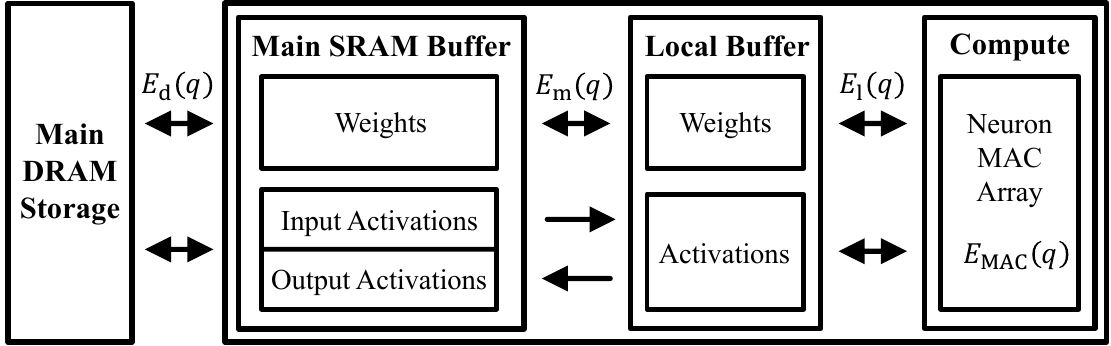}
     \caption{An illustration of the two-dimensional processing chip architecture, modified from \cite{moons2019embedded_qnn}.}
     \label{fig:chip}
\end{figure}

\section{GQ-FSL Computation \& Transmission Models} \label{sec:energy_latency}
To quantify the resource footprint of the proposed framework, we develop split-based parameterized models for the EC and latency associated with the GQ-FSL collaborative training procedure. To formulate these models, we first define the physical architecture of the computational platform.

Following the hardware architecture used in \cite{kim2023greenfl}, we consider a generic two-dimensional processing chip (e.g., a DNN accelerator) as shown in Fig.~\ref{fig:chip} \cite{moons2017minimum_qnn, moons2019embedded_qnn}. Deployed at the $k$-th client, the chip operates at a clock frequency $f_k$, and comprises a large off-chip dynamic random-access memory (DRAM), a parallel neuron array with an area for $\pkc$ MAC units, and a two-tiered on-chip memory hierarchy. This on-chip memory consists of a small local buffer (register file) that caches currently used operands, and a main static random-access memory (SRAM) buffer of size $\Skc$. The main SRAM is evenly partitioned, allocating $\Skc/2$ bits for storing the client-side submodel weights and $\Skc/2$ bits for the activations \cite{moons2017minimum_qnn}. Because the main SRAM buffer has limited capacity, the local dataset $\mathcal{D}_k$ is stored in the DRAM. Additionally, if the entire client-side submodel does not fit within the allocated main SRAM, the overflow weights are stored in the DRAM. 
We complement the client model by considering a high-performance, multi-core computing platform at the edge server. We assume that the edge server employs resource virtualization, allocating to each participating client a dedicated computational instance equipped with $\ps$ MAC units, a main SRAM capacity of $S$ bits, and an operating clock frequency $f_\mathrm{s}$. This parallel allocation allows the server to process the server-side submodels for all active clients simultaneously without queueing delays.


In our GQ-FSL framework, the computational load is physically distributed between the client and the edge server, strictly dictated by the cut layer $\lc$. Consequently, the network parameters are split accordingly. Let $N_l$, $A_l$, and $W_l$ denote the number of MAC operations, output activations, and weights at the $l$-th layer, respectively. The client-side computational load for a single data sample is parameterized by $\Mc(\lc) = \sum_{l=1}^{\lc}N_l$, $\Oc(\lc) = \sum_{l=1}^{\lc}A_l$, and $\dc(\lc) = \sum_{l=1}^{\lc}W_l$
. The size of the intermediate activations transmitted at the cut layer is given by $x_\mathrm{cut}(\lc) = A_{\lc} $. The server handles the remaining architecture, yielding $\Ms(\lc) = M - \Mc(\lc)$, $\Os(\lc) = O - \Oc(\lc)$, and $\ds(\lc) = d - \dc(\lc)$, where $M$, $O$, and $d$ are the corresponding parameters of the full DNN. For notational simplicity in the remainder of this paper, we hereafter omit the explicit dependence on $\lc$, denoting these parameters simply as $\Mc, \Oc, \dc, x_\mathrm{cut}$ and their server-side counterparts as $\Ms, \Os, \ds$.

\subsection{Latency Model} \label{sec:latency}
In alignment with existing SL-based frameworks \cite{lin2024efficient_psl, wu2023split}, we consider the per-global-round latency, which encompasses the computation\footnote{We ignore the model aggregation latency in the computation analysis because the workload of FedAvg aggregation is negligible compared to the intensive matrix multiplications involved in model training \cite{wu2023split}.} 
and communication 
delay
.

To model the computation latencies, we consider the computing chips clock frequencies $f_k$ and $f_\mathrm{s}$ for the $k$-th client and edge server, respectively. The computational throughput is governed by the effective parallelism of the two-dimensional MAC array, which dictates the actual number of MAC operations executed per clock cycle. This effective parallelism factor, denoted by $\pkc^\mathrm{eff}(\qc) = \pkc\cdot(\qmax/\qc)$ for the client and $\ps^\mathrm{eff}(\qs) = \ps\cdot(\qmax/\qs)$ for the server, increases at lower precisions because a physical silicon area containing $\pkc$ full-precision ($q_{\max}$-bit) MACs can accommodate proportionally more reduced-precision MACs \cite{moons2017minimum_qnn}. Therefore, the client- and server-side FP latencies per data sample can be calculated, respectively, as 
\begin{equation}
    \tfpkc(\qc,\lc) = \frac{\Mc}{\pkc^\mathrm{eff}(\qc)f_k}, \quad \tfps(\qs,\lc) = \frac{\Ms}{\ps^\mathrm{eff}(\qs)f_\mathrm{s}}. \label{eq:latency_fp}
\end{equation}

During BP, gradients are calculated in full precision ($\qmax$)
. For each layer, computing the gradients with respect to both its weights and the preceding layer's activations requires approximately twice the number of MAC operations as the FP (i.e., $2\Mc$ for the client) \cite{kim2023greenfl}.
Consequently, the client- and server-side BP latencies per data sample are given by
\begin{equation}
    \tbpkc(\lc) = \frac{2\Mc}{\pkc f_k}, \quad \tbps(\lc) = \frac{2\Ms}{\ps f_\mathrm{s}}, \label{eq:latency_bp}
\end{equation}
where using $\qmax$ inherently reduces the effective parallelism factor to the baseline physical array size $\pkc$ and $\ps$.

Having established the latency models for the local computations, we now account for the wireless communication latencies. In our FSL framework, communication occurs in two distinct phases: the per-local-iteration exchange of cut-layer data during the FP and BP, and the per-global-round submodel uploading and broadcasting.
For the wireless link between the client and the BS, we assume an orthogonal frequency division multiple access (OFDMA) scheme, where each participating device is allocated dedicated resources. As in \cite{kim2023greenfl}, the achievable uplink transmission rate of client $k$ is
\begin{equation}
    \rku = B \log_2\paren{1 + \frac{P_k^\mathrm{tx} h_k}{N_0 B}}, \label{eq:rate_up}
\end{equation}
where $B$ is the allocated bandwidth, $P_k^\mathrm{tx}$ is the transmission power, $N_0$ is the white noise power spectral density, and $h_k\!$ represents the channel gain, assumed constant during training. Under channel reciprocity, the achievable downlink transmission rate $\rkd$ is formulated analogously by replacing $P_k^\mathrm{tx}$ with the server's per-client allocated transmission power $P_\mathrm{s}^\mathrm{tx}$.

In a local iteration, the $k$-th client transmits the quantized intermediate activations at the cut layer to the server during the FP, and the server returns the corresponding full-precision gradients during the subsequent BP.
For a single data sample, the resulting uplink and downlink transmission latencies are thus given, respectively, by
\begin{equation}
    \tka(\qc,\lc) = \frac{x_{\mathrm{cut}} \qc}{\rku}, \quad \tkg(\lc) = \frac{x_{\mathrm{cut}} \qmax}{\rkd}, \label{eq:latency_act_grad}
\end{equation}
where $x_{\mathrm{cut}} \qc$ bits and $x_{\mathrm{cut}} \qmax$ bits are the respective uplink and downlink payload sizes. 


At the conclusion of a global round $t$, the $k$-th client uploads its $\qu$-bit quantized local submodel update $\bs{\tilde{d}}_{k}^{\raisebox{-0.6ex}{$\scriptstyle \, t+1$}}$ to the server for aggregation, requiring an upload latency of
\begin{equation}
    \tku(\qu,\lc) = \frac{\dc \qu}{\rku}. \label{eq:latency_up}
\end{equation}
Following global aggregation, the server broadcasts the updated global client-side submodel in full precision back to the participating clients. Using broadcasting bandwidth $B_b$ and transmit power $P_b^\mathrm{tx}$, the downlink broadcasting data rate is 
\begin{equation}
    \rb = B_b \log_2\paren{1 + \frac{P_b^\mathrm{tx} h_{\min}}{N_0 B_b}}, \label{eq:rate_brod}
\end{equation}
where $h_{\min}=\min_{k\in\mathcal{N}_t}(\hspace{0.1mm}h_k)$ represents the weakest channel gain among the participants. Accordingly, the corresponding broadcasting latency is 
\begin{equation}
    \tb(\lc) = \frac{\dc \qmax}{\rb}. \label{eq:latency_brod}
\end{equation}

\subsection{Energy Consumption Model}
The total EC of our proposed FSL framework encompasses both the local computations executed on the hardware accelerators and the wireless communication required to bridge the split architecture and exchange submodels. We first establish a general computation EC model of the FP and BP for a single data sample across both the clients and the server, followed by the communication EC model.

To quantify the computational cost across both entities, we adopt a precision-dependent energy model for the MAC operations \cite{moons2019embedded_qnn, kim2023greenfl}. For a given precision level $q$, the energy consumed by a single MAC operation is modeled as $\Emac(q) = A (q/\qmax)^a$, where $A > 0$ is a hardware-specific scaling constant, $1 < a < 2$ is a non-linear scaling factor, and $\qmax$ is the full-precision level. Because data movement between the various memory layers and the off-chip DRAM has a dominant impact on the energy footprint \cite{horowitz201410_EC, yang2017method_dataflow}, these memory accesses must be accounted for in the computation EC model. As in \cite{moons2017minimum_qnn}, the EC of reading from or writing to the local buffer is modeled as $\El(q)=\Emac(q)$, and for the main buffer as $\Em(q)=2\Emac(q)$. Since fetching data from the DRAM incurs a significantly higher cost than a single equivalent MAC operation \cite{horowitz201410_EC}, it is modeled as $\Ed(q) = A_{\mathrm{d}} \Emac(q)$, with $A_{\mathrm{d}} \gg 1$. We further assume that bias additions, normalization, and activation functions, which are performed in full precision \cite{hubara2018quantized_nn}, are modeled to cost $\Emac(\qmax)$ \cite{moons2019embedded_qnn}.

For the $k$-th client operating at a quantization precision level $\qc$, the EC for the FP computation (i.e., inference) per data sample is the sum of the energy consumed by the MAC units $\Eckc(\qc,\lc)$, weight memory accesses $\Ewkc(\qc,\lc)$, activation memory accesses $\Eakc(\qc,\lc)$, and DRAM accesses $\Edramkc(\qc,\lc)$, which yields \cite{moons2017minimum_qnn,moons2019embedded_qnn}
\begin{equation}
    \Efpkc(\qc,\lc) = \Eckc+\Ewkc+\Eakc+\Edramkc,
\end{equation}
where each component is modeled as a function of both the precision level and the splitting point:
\begin{align}
    \Eckc(\qc,\lc) &= \Emac(\qc)\Mc +3\Oc\Emac(\qmax), \nonumber \\
    \Ewkc(\qc,\lc) &= \Em(\qc)\dc + \El(\qc)\frac{\Mc}{\sqrt{\pkc^\mathrm{eff}(\qc)}}, \nonumber \\
    \Eakc(\qc,\lc) &= 2\Em(\qc)\Oc + \El(\qc)\frac{\Mc}{\sqrt{\pkc^\mathrm{eff}(\qc)}}, \nonumber \\
    \Edramkc(\qc,\lc) \! &= \Ed(\qmax)x_\mathrm{in} \!+ \Ed(\qc)\!\max\paren{\hspace{-0.2pt}\!\dc\qc\!-\!\frac{\Skc}{2},0\!\hspace{-0.2pt}} \nonumber \\
    & \quad\, + 2\Ed(\qc)\max\paren{\!\Oc\qc-\frac{\Skc}{2},0\!},
\end{align}
and $x_\mathrm{in}$ is the input dimension. The factor of 3 in $\Eckc$ accounts for the equivalent MAC costs of additive biasing, normalization, and activation functions, which are performed on all $\Oc$ output activations. For the on-chip memory accesses ($\Ewkc$ and $\Eakc$), data is transferred from the main SRAM to the local buffer for extensive reuse. Note that accessing the main buffer incurs a factor of 2 for activations, as they must be both fetched and subsequently stored back for the calculations of the next layer, whereas weights are only fetched. Due to the two-dimensional structure of the MAC array, activation- and weight-level parallelism enable a single weight to be reused simultaneously across $ \sqrt{\smash{\textstyle p_{\raisebox{-0.4ex}{$ \scriptstyle k,\mathrm{c}$}}}^{\hspace{-3.7mm}\mathrm{eff}}}\,$ activations, and vice versa \cite{moons2017minimum_qnn, moons2019embedded_qnn}. 
Finally, in $\Edramkc$, the raw input features $x_\mathrm{in}$ are fetched in full precision. For data exceeding the allocated main SRAM capacity, we similarly apply a factor of 2 solely to the intermediate output activations, which must be both read and written to the off-chip DRAM. Overflow weights are read-only as a full-precision copy is maintained in the DRAM.

The server-side EC for the FP, denoted as $\Efpks(\qc,\qs, \lc)$, follows an analogous structure, substituting the client-specific parameters with $\qs$, $\ps$, $\ps^\mathrm{eff}(\qs)$, $S$, $\Ms$, $\Os$, and $\ds$. A distinction, however, lies in the DRAM access energy. Instead of fetching the raw input data $x_{\mathrm{in}}$, the server fetches the cut-layer quantized activations $x_{\mathrm{cut}}$. Thus, the server's DRAM energy is formulated as
\begin{align}
    \Edramks(\qc,\qs, \lc) \! &= \Ed(\qc) x_{\mathrm{cut}}  + \Ed(\qs)  \max\paren{\!\ds \qs - \!\frac{S}{2}, 0\!} \nonumber \\
    & \quad\,+ 2\Ed(\qs) \max\paren{\!\Os \qs - \frac{S}{2}, 0\!}.
\end{align}

Since BP is executed in full precision and requires approximately twice as many MAC operations as FP, the EC for the BP at the client side is given by \cite{kim2023greenfl}
\begin{align}
    \Ebpkc(\lc) &= 2\Mc\Emac(\qmax) + \paren{\dc + 2\Oc}\Em(\qmax) \nonumber\\ 
      +&\, 2\El(\qmax)\frac{2\Mc}{\sqrt{\pkc}} \!+ \!\Ed(\qmax)  \!\max\paren{\!\dc \qmax \! - \!\frac{\Skc}{2}, 0\!}\nonumber \\
      +&\, 2\Ed(\qmax) \max\paren{\!\Oc \qmax - \frac{\Skc}{2}, 0\!},
\end{align}
The server-side BP energy, $\Ebpks(\lc)$, is formulated analogously with the corresponding server parameters. Therefore, the overall local computation EC for a single data sample is 
\begin{equation}
    E_k^\mathrm{C}(\qc,\qs,\lc)= E^\mathrm{FP}_k(\qc,\qs,\lc) + E^\mathrm{BP}_k(\lc), \label{eq:energy_comp}
\end{equation}
where $E^\mathrm{FP}_k(\qc,\qs,\lc) = \Efpkc(\qc,\lc) + \Efpks(\qc,\qs,\lc)$, and $E^\mathrm{BP}_k(\lc) = \Ebpkc(\lc) + \Ebpks(\lc)$.

Having established the EC model for the local computations, we now account for the wireless communication overhead incurred during the per-local-iteration exchange of intermediate data and the per-global-round submodel uploading and broadcasting. Since the transmission delays were already defined in Section~\ref{sec:latency}, the corresponding transmission EC is directly formulated by scaling the respective latencies by the designated transmission powers. The energy consumed by the intermediate cut-layer communication to complete the FP and BP for a single data sample is
\begin{equation}
    E_k^\mathrm{T}(\qc,\lc) = \tka(\qc,\lc)P_k^\mathrm{tx} + \tkg(\lc)P_\mathrm{s}^\mathrm{tx}, \label{eq:energy_comm}
\end{equation}
where the transmission latencies of the intermediate activations $\tka(\qc,\lc)$ and gradients $\tkg(\lc)$ are given in Eq.~\eqref{eq:latency_act_grad}.
Note that the EC of a single SGD step, comprising both the FP and BP, is the sum of the per-data sample energies $E_k^\mathrm{C}$ and $E_k^\mathrm{T}$. During training, this computation is performed at every local iteration over a mini-batch of data samples. 

Finally, the EC for uploading the quantized local submodel for aggregation at the conclusion of a global round is 
\begin{equation}
    \Eku(\qu,\lc) = \tku(\qu,\lc) P_k^\mathrm{tx}, \label{eq:energy_up}
\end{equation}
where the submodel upload latency for the $k$-th client $\tku(\qu,\lc)$ is defined in Eq.~\eqref{eq:latency_up}, and the EC incurred by the server during the broadcasting phase is
\begin{equation}
    \Eb(\lc) = \tb(\lc) P_b^\mathrm{tx}, \label{eq:energy_brod}
\end{equation}
with broadcasting latency $\tb(\lc)$ as in Eq.~\eqref{eq:latency_brod}.

\section{Energy Efficient QNN-based FSL} \label{sec:problem_formulation}
Deploying GQ-FSL effectively requires balancing the energy savings of reduced precision and computation offloading against their impact on training latency and convergence. To navigate this tradeoff, we develop a joint optimization strategy that minimizes the total energy footprint under strict training latency and target accuracy constraints. We first establish the synchronous per-round latency, followed by formulating the core optimization problem. Next, we derive a theoretical convergence bound to characterize the target accuracy under statistically heterogeneous data. Finally, we transform the problem into tractable analytical expressions to determine the optimal framework configuration.

\subsection{Per-Round Total Latency}
Prior quantized FL research typically navigates the framework design by analyzing the tradeoff between EC and the total number of global rounds $T$ required for convergence. However, optimizing purely for global rounds ignores the physical time required to execute them. In a practical split architecture, lowering bit-widths and adjusting the cut layer profoundly alters both the per-round computation and transmission delays. 




\begin{figure}[t]
     \centering
     \includegraphics[width=\linewidth]{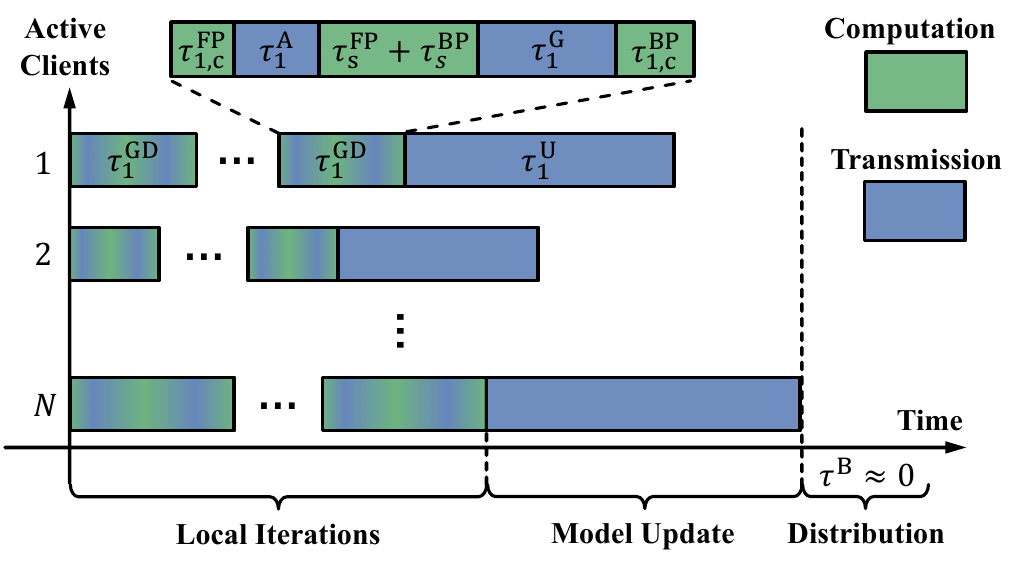}
     \caption{FSL per-round latency breakdown ($b=1$).}
     \label{fig:latency}
\end{figure}

To capture the total training time, we must formalize the physical execution of each round. Because global aggregation is synchronous, the per-round latency is strictly bottlenecked by the slowest participating client, a limitation commonly known as the straggler effect. As shown in Fig.~\ref{fig:latency}, the total latency per global round\footnote{Note that the latency associated with model broadcasting in Eq.~\eqref{eq:latency_brod} is neglected from Eq.~\eqref{eq:latency_round} due to the high transmit power and large bandwidth typically available at the BS \cite{yang2020energy_efficient_fl}.} can be expressed as
\begin{equation}
    \tau(\qc,\qs,\qu,\lc) = \max_{k\in\mathcal{N}_t}\curly{\tku(\qu,\lc) + \!Ib\,\tau^\mathrm{GD}_k(\qc,\qs,\lc)}, \label{eq:latency_round}
\end{equation} 
where $I$ is the number of local iterations, $b$ is the mini-batch size, and 
\begin{align}
    \tau^\mathrm{GD}_k(\qc,\qs,\lc) &= \tfpkc(\qc,\lc)+\tka(\qc,\lc)+ \tfps(\qs,\lc) \nonumber \\
    & \hspace{4mm}+\tbps(\lc)+\tkg(\lc)+\tbpkc(\lc).
\end{align}

\subsection{Problem Formulation}
Given our established energy and latency models, we now formulate a joint optimization problem to minimize the expected total training EC of the proposed GQ-FSL framework, while ensuring convergence under a target accuracy $\epsilon$ and an expected training latency constraint $\tau_{\max}$. The optimization problem is formulated as follows:
\begin{subequations}\label{eq:opt_problem}
    \begin{align}
        \min_{\qc, \qs, \qu, \lc, K, I} \, & \mathbb{E} \Bigg[ \sum_{t=1}^{T} \bigg[ \Eb(\lc) + \sum_{k \in \mathcal{N}_t} \Big( \Eku(\qu,\lc) \nonumber \\ 
        & \hspace{6mm} + I b \big( E_k^\mathrm{C}(\qc,\qs,\lc) + E_k^\mathrm{T}(\qc,\lc) \big) \Big) \bigg] \Bigg] \label{eq:opt_obj} \\
        \text{s.t.} \quad 
        &  \expec{F(\bs{w}^{\text{\raisebox{-0.4ex}{$\scriptstyle T$}}})} - F(\bs{w}^*) \le \epsilon, \label{eq:opt_acc} \\
        & T\,\expec{\tau(\qc,\qs,\qu,\lc)} \le \tau_{\max}, \label{eq:opt_lat} \\
        & \lc \in \{1, \dots, \ell\}, \label{eq:opt_split} \\
        & \qc, \qs, \qu \in \{1, \dots, \qmax\}, \label{eq:opt_quant} \\
        & K \in \{1,\dots,N\}, \, I \in \{1,\dots,I_{\max}\} \label{eq:opt_client_iter},
    \end{align}
\end{subequations}
where $F(\bs{w}^{\text{\raisebox{-0.4ex}{$\scriptstyle T$}}})$ is the global loss function evaluated at the final model weights $\bs{w}^{\text{\raisebox{-0.4ex}{$\scriptstyle T$}}}$ after $T$ global rounds, $F(\bs{w}^*)$ is the optimal loss, and $\tau(\qc,\qs,\qu,\lc)$ is the per-round latency defined in Eq.~\eqref{eq:latency_round}. Constraint~\eqref{eq:opt_acc} ensures that the trained FSL model converges to the target accuracy $\epsilon$ after $T$ rounds. Constraint~\eqref{eq:opt_lat} enforces that the expected training latency does not exceed $\tau_{\max}$. Constraint~\eqref{eq:opt_split} restricts the cut layer to a valid intermediate layer index, ensuring a partition into client- and server-side submodels that prevents complete model offloading. Constraint~\eqref{eq:opt_quant} restricts the decision variables for the client computations, server computations, and submodel uplink transmissions to valid integer quantization levels up to the maximum precision $\qmax$. Finally, constraint~\eqref{eq:opt_client_iter} bounds the number of participating clients up to the total available devices $N$ and caps the local iterations at a predefined maximum $I_{\max}$.

Problem~\eqref{eq:opt_problem} is formulated 
under the assumption that the channel gains $h_k$ and the heterogeneous computational capabilities of the devices ($f_k,\pkc,\Skc$) are 
known. Then, 
the expectations in the EC objective function~\eqref{eq:opt_obj} and the latency constraint~\eqref{eq:opt_lat} 
are taken 
with respect to the random sampling of $K$ clients per global round. For the target accuracy constraint~\eqref{eq:opt_acc}, the expectation is taken with respect to the random client selection, the stochastic quantization noise, and the SG noise. 

\subsection{Convergence Analysis} \label{sec:convergence}
To solve problem~\eqref{eq:opt_problem}, we must first tractably express the target accuracy constraint in~\eqref{eq:opt_acc} by analyzing the convergence behavior of our proposed GQ-FSL framework. This analysis establishes an analytical relationship between the expected optimality gap and the decision variables,
allowing us to explicitly derive the required number of global rounds $T$ with respect to the target accuracy $\epsilon$. 

Following \cite{zheng2020design}, we simplify the analysis by assuming uniform local dataset sizes across all clients, $D_k \!=\! D/N$ for all $k\in\mathcal{N}$, and by considering the scenario where the $K$ participating clients are sampled uniformly at random during each global round. 
To quantify the degree of data heterogeneity (non-IID), we define $\Gamma=F(\bs{w}^*)-\frac{1}{N}\!\sum_{k=1}^NF_k^*$, where $F(\bs{w}^*)$ and $F_k^*$ denote the global and local optimums, respectively \cite{li2020flconvergence,zheng2020design}.
Consistent with standard practices in the literature \cite{li2020flconvergence, zheng2020design, kim2023greenfl}, we introduce the following assumptions regarding the loss function and the SG.
\begin{assumption} \label{asm:loss_sg}
    The loss functions satisfy the following properties:
    \begin{itemize}
        \item $F_k(\bs w)$ is L-smooth: $\forall \bs v$ and $\bs w$, \vspace{-2.5mm}
        \begin{equation*}
            F_k(\bs v) \leq F_k(\bs w)+(\bs v -\bs w)^\tp \nabla F_k(\bs w) +\frac{L}{2}\norm{\bs v - \bs w}^2. \vspace{-2mm}
        \end{equation*} 
        
        \item $F_k(\bs w)$ is $\mu$-strongly convex: $\forall \bs v$ and $\bs w$, \vspace{-2.5mm}
        \begin{equation*}
            F_k(\bs v) \geq F_k(\bs w)+(\bs v -\bs w)^\tp \nabla F_k(\bs w) +\frac{\mu}{2}\norm{\bs v - \bs w}^2. \vspace{-2mm}
        \end{equation*}
        
        \item Bounded variance for mini-batch SG: \vspace{-2.5mm}
        \begin{equation*}
            \mathbb{E}\left[ \norm{\nabla F_k(\wkt, \xi_{k}^t) -  \nabla F_k(\wkt)}^2 \right] \! \leq \sigma_k^2, \hspace{-3.2pt} \quad k=1,\dots,N. \vspace{-2mm}
        \end{equation*}
    
        \item Uniformly bounded squared norm of mini-batch SG: \vspace{-2.5mm}
        \begin{equation*}
            \hspace{-16mm} \mathbb{E}\left[ \norm{\nabla F_k(\wkt,\xi_k^t)}^2\right] \leq G^2, \quad  k=1,\dots,N.
        \end{equation*}
    \end{itemize}
\end{assumption}

Stochastic quantization during local training and wireless transmission inherently induces noise that degrades model accuracy and convergence of our FSL Framework. 
The following theorem leverages bound~\eqref{eq:fsl_quant_err} from Lemma~\ref{lem:quatization} to establish a generalized upper bound on the expected optimality gap, explicitly capturing the interplay between the quantization errors and the FSL architecture.

\begin{theorem}[] \label{thm:convergence}
    Let Assumption~\ref{asm:loss_sg} hold. Consider a learning rate $\eta_t=\frac{2}{\mu(t+\gamma)}$ where $\gamma=\max\big(8\frac{L}{\mu}-1,I\big)$, and let $\alpha>L$ satisfying $\eta_1\le\frac{\alpha-L}{\alpha L}$.
    Then, the optimality gap of the proposed GQ-FSL framework satisfies
    \begin{equation}
      \expec{F(\bs{w}^{\text{\raisebox{-0.4ex}{$\scriptstyle T$}}})}\! - \!F(\bs{w}^*) \leq \frac{L}{2(TI +\gamma)}\left[\frac{4\psi_2 + (\gamma+1)G^2}{\mu^2}\right] \!\! + \! \frac{L\psi_1}{2\mu \rule[-1.5ex]{0pt}{0pt}}, \label{eq:converge}
    \end{equation}
    where \vspace{-1.5mm}
    \begin{IEEEeqnarray}{rCl}
        \psi_1 &=& (\alpha - \mu)\left( \frac{\dc}{2^{2\qc}} + \frac{\ds}{2^{2\qs}} \right) + \frac{2L\Gamma}{\alpha}, \label{eq:psi1}
        \\
        \psi_2 &=& \frac{8\alpha(I-1)^2G^2}{\mu(\gamma+1)} + 2L\Gamma + \sum_{k=1}^N \frac{\sigma_k^2}{N^2} \nonumber \\
        && + \frac{4I^2G^2(N-K)}{K(N-1)} + \frac{4\dc IG^2}{K2^{2\qu}}. \label{eq:psi2}
    \end{IEEEeqnarray}
\end{theorem}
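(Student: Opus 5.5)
We follow the standard local-SGD/FedAvg analysis with partial participation, as in \cite{li2020flconvergence, zheng2020design}, adapted to quantized weights as in \cite{kim2023greenfl}. The plan has four stages.

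\textbf{Setup.} Index all local steps globally by $t'=(t-1)I+\tau$. Define two virtual sequences:
\begin{itemize}
\item the averaged full-precision iterate $\bar{\bs v}^{t'}=\frac1N\sum_k \bs w_k^{t'}$ over all clients, which is what one step of local SGD on every device would produce;
\item the aggregated iterate $\bar{\bs w}^{t'}$, which coincides with $\bar{\bs v}^{t'}$ except at synchronization steps. At those steps only the $K$ sampled clients contribute, and their client-side updates are $\qu$-quantized.
\end{itemize}
Because both the sampling and $Q_\mathrm{u}$ are unbiased by Lemma~\ref{lem:quatization}, $\expec{\bar{\bs w}^{t'}}=\bar{\bs v}^{t'}$. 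The error decomposes as
\[
\expec{\|\bar{\bs w}-\bs w^*\|^2}=\expec{\|\bar{\bs v}-\bs w^*\|^2}+\expec{\|\bar{\bs w}-\bar{\bs v}\|^2}.
\]
\begin{itemize}
\item \textbf{Sampling variance.} Following the without-replacement argument of \cite{li2020flconvergence}, the sampling part is bounded by $\frac{N-K}{K(N-1)}4\eta_{t'}^2I^2G^2$.
\item \textbf{Transmission quantization.} Apply \eqref{eq:lemma1_vector} to the normalized update, whose entries lie in $[-1,1]$, and rescale by $\|\bs d_{k,\mathrm c}\|_\infty^2\le \|\bs d_{k,\mathrm c}\|^2\le \eta^2 I^2G^2$. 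Averaging over $K$ independent clients gives $\frac{\dc}{K2^{2\qu}}\cdot(\text{const})\,\eta^2 IG^2$. The factor $4\dc I G^2/(K2^{2\qu})$ in $\psi_2$ should come from using the step-size ratio $\eta_{t_0}\le2\eta_{t'}$ within a round, together with Jensen applied only over $\tau$.
\end{itemize}
The clipping/projection onto $[-1,1]^d$ is nonexpansive and $\bs w^*$ is assumed inside the box, so it can only decrease distances and will be dropped.

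\textbf{One-step recursion for $\bar{\bs v}$.} The gradient is evaluated at the quantized weights $\tilde{\bs w}_k$ rather than at $\bs w_k$. Write
\[
\|\bar{\bs v}^{t'+1}-\bs w^*\|^2=\|\bar{\bs w}^{t'}-\bs w^*\|^2-2\eta\langle\cdot\rangle+\eta^2\|\cdot\|^2 .
\]
Handle the cross term $-2\eta\langle \bs w_k-\bs w^*,\nabla F_k(\tilde{\bs w}_k)\rangle$ by splitting it into $\langle \bs w_k-\tilde{\bs w}_k,\cdot\rangle+\langle\tilde{\bs w}_k-\bs w^*,\cdot\rangle$. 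Strong convexity at $\tilde{\bs w}_k$ gives $-(F_k(\tilde{\bs w}_k)-F_k(\bs w^*))-\frac\mu2\|\tilde{\bs w}_k-\bs w^*\|^2$. Unbiasedness of $Q$ converts the expectation $\expec{\|\tilde{\bs w}_k-\bs w^*\|^2}$ into $\|\bs w_k-\bs w^*\|^2$ plus the variance from \eqref{eq:fsl_quant_err}.

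This is the step where the free parameter $\alpha$ enters. The remaining inner product with $\bs w_k-\tilde{\bs w}_k$ is controlled by Young's inequality with weight $\alpha$, combined with $L$-smoothness and the definition of $\Gamma$; the condition $\eta_1\le\frac{\alpha-L}{\alpha L}$ is exactly what makes the leftover $\eta^2\|\nabla F_k\|^2$ term absorbable into the function-gap term, with the heterogeneity term $\frac{2L\Gamma}{\alpha}$ as residual. The outcome should be the recursion
\[
\Delta_{t'+1}\le(1-\mu\eta_{t'})\Delta_{t'}+\eta_{t'}\psi_1+\eta_{t'}^2\psi_2 .
\]
The $\eta\psi_1$ term, being \emph{first order} in $\eta$, is what produces the non-vanishing floor $\frac{L\psi_1}{2\mu}$. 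The client-drift term $\frac1N\sum_k\expec{\|\bar{\bs w}-\bs w_k\|^2}\le 4\eta^2(I-1)^2G^2$ is inserted as in \cite{li2020flconvergence}, weighted by $\alpha$ and using $\eta_{t'}\le \frac{2}{\mu(\gamma+1)}$ to obtain $\frac{8\alpha(I-1)^2G^2}{\mu(\gamma+1)}$. The SG variance contributes $\sum_k\sigma_k^2/N^2$.

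\textbf{Induction and conversion to a loss bound.} With $\eta_{t'}=\frac{2}{\mu(t'+\gamma)}$, prove by induction the hypothesis
\[
\Delta_{t'}\le \frac{v}{t'+\gamma}+\frac{\psi_1}{\mu}, \qquad v=\max\Bigl\{\frac{4\psi_2}{\mu^2},(\gamma+1)\Delta_1\Bigr\}.
\]
The induction step uses $(1-\mu\eta)\frac{\psi_1}{\mu}+\eta\psi_1=\frac{\psi_1}{\mu}$, so the constant part is preserved exactly. For the base term, bound $\Delta_1\le G^2/\mu^2$ via strong convexity and bounded gradients, so that $v\le\frac{4\psi_2+(\gamma+1)G^2}{\mu^2}$. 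Finally apply $L$-smoothness, $F(\bs w^T)-F^*\le\frac L2\Delta_{TI}$, to obtain \eqref{eq:converge}.

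\textbf{Expected obstacle.} The delicate part is the second stage: obtaining the quantization variance only at first order, with coefficient exactly $(\alpha-\mu)$ and without spurious $G$-dependent terms, while keeping the $\Gamma$ term split as $\frac{2L\Gamma}{\alpha}$ in $\psi_1$ and $2L\Gamma$ in $\psi_2$. The first attempt will be to expand around $\tilde{\bs w}_k$ throughout and use $\expec{\|\tilde{\bs w}-\bs w^*\|^2}=\|\bs w-\bs w^*\|^2+\mathrm{Var}$. Here $-\mu$ comes from the strong-convexity term and $+\alpha$ from the Young's-inequality term, each multiplying the variance. The constants will then be tuned until the $\eta_1\le\frac{\alpha-L}{\alpha L}$ condition closes the argument.
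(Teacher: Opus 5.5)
Your outline follows the paper's proof essentially step for step. It uses the same virtual sequences; the paper inserts an intermediate $\bar{\bs u}^{t+1}$ to peel off the uplink-quantization and sampling errors one at a time, which you merge into a single orthogonal term. It uses the same one-iteration bound (Lemma~\ref{lem:local_iter}), where strong convexity at $\tilde{\bs w}_k^t$ contributes $-\mu$ and the $\alpha$-weighted Young inequality contributes $+\alpha$ to the coefficient of the quantization variance. It has the same absorption of $2L\eta_t(\frac{1}{\alpha}+\eta_t)\le 2\eta_t$ under $\eta_1\le\frac{\alpha-L}{\alpha L}$. It converts the $\eta_t^3$ drift term the same way via $\eta_t\le\frac{2}{\mu(\gamma+1)}$; the paper does this inside the induction through $\psi_2=\psi_2'+\frac{\beta}{\gamma+1}\psi_3'$. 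The induction hypothesis, base case and final $L$-smoothness step also match.

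One detail decides whether you land on the stated constants: the paper does not pivot at $\bs w_k^t$. It applies a single Young inequality to $\langle\bar{\bs w}^t-\tilde{\bs w}_k^t,\nabla F_k(\tilde{\bs w}_k^t)\rangle$. Only then does it split $\mathbb{E}\|\tilde{\bs w}_k^t-\bar{\bs w}^t\|^2=\mathbb{E}\|\tilde{\bs w}_k^t-\bs w_k^t\|^2+\|\bs w_k^t-\bar{\bs w}^t\|^2$ by unbiasedness of $Q$. Suppose you instead apply Young separately to the drift piece and the quantization piece, each with weight $\alpha$. Then the $\|\nabla F_k(\tilde{\bs w}_k^t)\|^2$ residue doubles, giving $\frac{4L\Gamma}{\alpha}$ in $\psi_1$ and the stricter requirement $\eta_1\le\frac{\alpha-2L}{\alpha L}$.

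The step your sketch does not actually deliver is the uplink term $\frac{4\dc IG^2}{K2^{2\qu}}$. Your own computation applies Lemma~\ref{lem:quatization} to the normalized update, rescales by $\|\bs d_{k,\mathrm c}\|_\infty^2\le\|\bs d_{k,\mathrm c}\|^2$, and uses $\mathbb{E}\|\bs d_{k,\mathrm c}\|^2\le I\sum_\tau\eta_\tau^2G^2\le4I^2\eta_t^2G^2$. That gives $\frac{4\dc I^2G^2}{K2^{2\qu}}$. ``Jensen applied only over $\tau$'' is exactly the step that produces the $I^2$, so it cannot remove the extra factor. Under Assumption~\ref{asm:loss_sg} alone, $\mathbb{E}\|\bs d_{k,\mathrm c}\|^2$ can genuinely be of order $I^2\eta_t^2G^2$ when the stochastic gradients within a round are nearly aligned. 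The paper does not derive this constant; it imports it from \cite[Lemma~7]{zheng2020design}, adapted to the $\qu$-bit quantizer. So reproducing the statement verbatim requires citing that bound, while your self-contained derivation yields $I^2$ in place of $I$ in the last term of $\psi_2$.

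Finally, your reason for discarding the clipping is too quick. Nonexpansiveness bounds $\|\mathrm{clip}(\bs v_k)-\bs w^*\|$ client by client. But the tracked quantity is the distance of the \emph{average} of per-client clipped iterates, and clipping can increase it. The paper simply analyzes the unclipped recursion \eqref{eq:model_update}.
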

\begin{IEEEproof}
     The proof is provided in Appendix~\ref{app:thm_1_proof}. 
\end{IEEEproof}

\textit{Remark 1:} Our convergence analysis builds upon the framework established in \cite{kim2023greenfl} for QNN-based FL. An in-depth examination of the proof in \cite{kim2023greenfl} reveals  the need to sharpen the argument, particularly the explicit assumption of a lower bound on the learning rate ($\rho\eta_t\ge1$ for some $\rho\gg1$), which contradicts the learning rate specified in their main theorem ($\eta_t \le 1/\rho$) and yields an imprecise convergence bound. To resolve this, 
we introduce a tunable parameter $\alpha$ into the convergence analysis in Appendix~\ref{app:local_iter}. This bypasses the restrictive assumption and yields a strictly tighter upper bound on the optimality gap induced by quantization errors, as detailed later in this section. We further refine the induction proof in Appendix~\ref{app:induction} to better accommodate the optimality gap resulting from quantization errors during training, and to include the missing factor that addresses the induction base case. 
Moreover, 
by explicitly evaluating the impact of the splitting point, our generalized bound encompasses both standard quantized FL and the new quantized FSL architectures. 

\textit{Remark 2 (Convergence Properties):} 
Theorem~\ref{thm:convergence} establishes the fundamental relationship between our GQ-FSL architectural parameters and the resulting convergence bound. We observe several intuitive monotonic relationships within this theoretical guarantee. Increasing the number of participating clients $K$ tightens the bound, thus reducing the optimality gap. 
In contrast, the quantization error components embedded within $\psi_1$ and $\psi_2$ decay exponentially as the precisions $\qc$, $\qs$ and $\qu$ increase. 
The convergence bound is also degraded by the data heterogeneity factor $\Gamma$, whose impact, together with that of quantization, is examined in greater detail in the subsequent analysis.
Finally, while theoretical convergence can be strictly improved by maximizing precision levels or client participation, these improvements come at the cost of inflating the system's overall EC and training latency.

\subsubsection{Optimal \texorpdfstring{$\alpha$}{alpha} Selection}
The convergence upper bound of the proposed GQ-FSL framework depends on the free parameter $\alpha$ introduced in Theorem~\ref{thm:convergence}. 


\begin{corollary} \label{cor:optimal_alpha}
    The parameter $\alpha^*$ that minimizes the convergence upper bound in Theorem~\ref{thm:convergence} is given by
    \begin{equation} \label{eq:alpha_star}
    \alpha^* = \max\left\{\sqrt{\frac{2L\Gamma(TI+\gamma)}{X Y \mu + Z(TI+\gamma)}}, \, \frac{\mu L(\gamma+1)}{\mu(\gamma+1)-2L }, \, L \right\} \rule[-5.5mm]{0pt}{0pt}, 
    \end{equation}
    where $X=\frac{4}{\mu^2}$, $Y=\frac{8(I-1)^2G^2}{\mu(\gamma+1)}$, and $Z=\frac{\dc}{2^{2\qc}}+\frac{\ds}{2^{2\qs}}$. \vspace{1mm}
\end{corollary}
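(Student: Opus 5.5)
The plan is to isolate the part of the bound in \eqref{eq:converge} that depends on $\alpha$, minimize it as a one-dimensional convex function, and then project the unconstrained minimizer onto the feasible set $\{\alpha>L,\ \eta_1\le\frac{\alpha-L}{\alpha L}\}$ from Theorem~\ref{thm:convergence}.

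\textbf{Step 1: the $\alpha$-dependent part.} $\alpha$ enters only through two terms: the first summand $\frac{8\alpha(I-1)^2G^2}{\mu(\gamma+1)}=\alpha Y$ of $\psi_2$, and $(\alpha-\mu)Z+\frac{2L\Gamma}{\alpha}$ in $\psi_1$, with $Z=\frac{\dc}{2^{2\qc}}+\frac{\ds}{2^{2\qs}}$. Substituting into \eqref{eq:converge} and using $X=4/\mu^2$, the right-hand side equals, up to an additive constant independent of $\alpha$,
\begin{equation*}
h(\alpha)=\frac{L}{2}\left[\left(\frac{XY}{TI+\gamma}+\frac{Z}{\mu}\right)\alpha+\frac{2L\Gamma}{\mu\,\alpha}\right].
\end{equation*}
This has the form $a\alpha+c/\alpha$ with $a,c\ge0$. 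It is therefore convex on $\alpha>0$.

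\textbf{Step 2: the unconstrained minimizer.} Setting $h'(\alpha)=0$ gives $\alpha^2=\frac{2L\Gamma}{\mu\,(XY/(TI+\gamma)+Z/\mu)}$. Multiplying numerator and denominator by $(TI+\gamma)$ yields exactly the first entry of \eqref{eq:alpha_star}.

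\textbf{Step 3: the feasible set.} Since $\eta_1=\frac{2}{\mu(1+\gamma)}$, the condition $\eta_1\le\frac{\alpha-L}{\alpha L}=\frac1L-\frac1\alpha$ is equivalent to $\frac1\alpha\le\frac1L-\eta_1$. For the right-hand side to be positive we need $\mu(\gamma+1)>2L$. This holds because $\gamma+1\ge 8L/\mu$, so $\mu(\gamma+1)\ge 8L$. Under this positivity, the condition rearranges to $\alpha\ge\frac{\mu L(\gamma+1)}{\mu(\gamma+1)-2L}$, which is the second entry of \eqref{eq:alpha_star}. This threshold automatically exceeds $L$, so the feasible set is a closed half-line $[\underline\alpha,\infty)$. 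The entry $L$ in the max is retained only to make the requirement $\alpha>L$ explicit, for instance in the degenerate case $\Gamma=0$ where the first entry vanishes.

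\textbf{Step 4: projection.} Because $h$ is convex on the half-line, its constrained minimizer is the projection of the stationary point onto $[\underline\alpha,\infty)$, which gives the max in \eqref{eq:alpha_star}.

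The only delicate point is Step 3. One must verify that the learning-rate condition defines a nonempty half-line, which is where the choice $\gamma\ge 8L/\mu-1$ is used, and handle the degenerate cases $\Gamma=0$ or $Z=0$ so that the max formula remains valid.
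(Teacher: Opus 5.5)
Your proposal is correct and follows essentially the same route as the paper's proof: isolate the $\alpha$-dependent terms $\alpha Y$ (in $\psi_2$) and $(\alpha-\mu)Z+\frac{2L\Gamma}{\alpha}$ (in $\psi_1$), solve $h'(\alpha)=0$ to get the square-root entry, invoke convexity ($h''(\alpha)=\frac{2L^2\Gamma}{\mu\alpha^3}>0$), and project onto the feasible set defined by $\alpha>L$ and $\eta_1\le\frac{\alpha-L}{\alpha L}$. Your explicit check that $\mu(\gamma+1)\ge 8L>2L$ is a small but worthwhile addition the paper leaves implicit. It shows the learning-rate condition defines a nonempty half-line whose endpoint already exceeds $L$, so the $L$ entry of the max is never active.
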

\begin{IEEEproof}
    Please refer to Appendix~\ref{app:cor_1_proof}.
\end{IEEEproof}
Next, we analyze the asymptotic behavior of the optimality gap as the number of global rounds $T$ and the compute quantization precisions $(\qc, \qs)$ increase.

\subsubsection{Asymptotic Optimality Gap}
Based on Theorem~\ref{thm:convergence}, the optimality gap converges to the following limit: 
\begin{equation}
    \expec{F(\bs{w}^{\text{\raisebox{-0.4ex}{$\scriptstyle T$}}})}\! - \!F(\bs{w}^*) \to \frac{L\psi_1}{2\mu} \quad \text{as} \quad T\to\infty, \label{eq:optimal_gap}
\end{equation}
where $\psi_1 = (\alpha^* - \mu)Z + \frac{2L\Gamma}{\alpha^*}$ consists of two components. The first term is driven directly by the local training quantization error, $Z=\left( \frac{\dc}{2^{2\qc}} + \frac{\ds}{2^{2\qs}} \right)$, as established in Eq.~\eqref{eq:fsl_quant_err}, while the second term is driven by the degree of data heterogeneity. 
Guaranteeing a specific target accuracy $\epsilon$ as per~\eqref{eq:opt_acc} requires the asymptotic bound in~\eqref{eq:optimal_gap} to satisfy $\frac{L\psi_1}{2\mu} \le \epsilon$. Notably, this condition inherently establishes minimum threshold levels for the local precisions $\qc$ and $\qs$.

As shown in Eq.~\eqref{eq:alpha_star} of Corollary~\ref{cor:optimal_alpha}, the optimal bounding parameter $\alpha^*$ depends on $T$, $\qc$, and $\qs$, and asymptotically converges to $\sqrt{\frac{2L\Gamma}{Z}}$ as $T\to\infty$.
Evaluating the components of $\psi_1$ under this asymptotic time limit reveals that both the quantization term and the data heterogeneity term scale with $\sqrt{Z}$. This relationship highlights a critical system dynamic when we subsequently extend this to the theoretical limit of quantization precision.
As $\qc$ and $\qs$ increase, 
the quantization error $Z$ decays exponentially
, meaning that the asymptotic optimality gap in \eqref{eq:optimal_gap} is driven closer 
to zero and becomes practically negligible in full-precision configurations ($\qmax$).
Therefore, under purely unquantized local training, the system converges to the exact global optimum regardless of the degree of data heterogeneity $\Gamma$. 
Furthermore, when combined with unquantized communication, our bound~\eqref{eq:converge} becomes consistent with the standard convergence result of \cite{li2020flconvergence}. 
However, Eq.~\eqref{eq:optimal_gap} reveals that reduced-precision local training restricts this convergence, creating a permanent residual error floor that is fundamentally amplified by the non-IID setting.

\subsection{Solution Approach} \label{sec:solution}

With the convergence behavior of the GQ-FSL framework formally established, we can now explicitly evaluate the target accuracy constraint in~\eqref{eq:opt_acc}. By substituting the optimal bounding parameter $\alpha^*$ and the theoretical upper bound~\eqref{eq:converge} into the accuracy requirement, we establish a direct analytical relationship between the target accuracy $\epsilon$ and the framework parameters. Although this resulting expression becomes highly nonlinear with respect to $T$, rendering closed-form algebraic inversion intractable, the bound remains strictly monotonically non-increasing in $T$. This property guarantees a unique intersection with any feasible target accuracy $\epsilon$. Therefore, we can employ a discrete bisection search to efficiently determine the minimum number of global rounds $T$ required to satisfy the constraint. This step successfully transforms the implicit accuracy requirement into a tractable mathematical condition, allowing us to proceed to reformulate and solve the overarching optimization problem in~\eqref{eq:opt_problem}.


Solving the mixed-integer nonlinear optimization problem formulated in~\eqref{eq:opt_problem} is inherently challenging due to the non-convex nature of the objective function and the complex interdependencies between the quantization precisions and the splitting point. However, because quantization precision levels are finite (typically $\qmax=32$ bits), DNN architectures consist of limited number of layers, and both client participation and local iterations are capped by practical system limits, the search space for the decision variables $\{\qc, \qs, \qu, \lc, K, I\}$ is strictly discrete and bounded. Therefore, the optimal configuration can be determined offline prior to training through an 
exhaustive search over the feasible set.

To facilitate this search, we must accurately evaluate the expected total EC in the objective~\eqref{eq:opt_obj} and the expected per-round latency in the constraint~\eqref{eq:opt_lat}. In our system, the server's broadcast energy is dictated by the client with the worst channel condition, while the per-round latency is bottlenecked by the overall slowest participating client due to synchronous global aggregation. Because both metrics depend on worst-case variables within the active group, evaluating their expectations requires calculating the expected maximum of a randomly sampled subset. In the following lemma, we derive the analytical expression for this calculation.

\begin{lemma}[Expected Maximum of a Finite Sample] \label{lem:exp_max}
    Let $\mathcal{U} = \{u_1, u_2, \dots, u_N\}$ denote a fixed and finite set of $N$ scalar values. Let $\mathcal{S}$ be a random subset of size $K$ drawn uniformly without replacement from $\mathcal{U}$. Let $u_{(1)} \le u_{(2)} \le \dots \le u_{(N)}$ represent the ordered statistics of the entire population $\mathcal{U}$. The expected maximum value of the sampled subset, defined as $U$, is given by 
    \begin{equation}
        \expec{U} = \expec{\max_{k \in \mathcal{S}} \{u_k\}} = \sum_{k=K}^{N} u_{(k)} \frac{\binom{k-1}{K-1}}{\binom{N}{K}}. \label{eq:expected_max}
    \end{equation}
\end{lemma}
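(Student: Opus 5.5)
The plan is to work with positions in the sorted order instead of with the values. The sample $\mathcal{S}$ is a uniformly random $K$-subset of the $N$ items, so I will label the items by their ranks $1,\dots,N$ in the ordering $u_{(1)}\le\dots\le u_{(N)}$. Ties are broken arbitrarily but consistently, for example by original index. This ranking depends only on the fixed set $\mathcal{U}$. Drawing $\mathcal{S}$ is therefore the same as drawing a uniformly random $K$-subset $R\subseteq\{1,\dots,N\}$ of ranks. Under this identification, $\max_{k\in\mathcal{S}}\{u_k\}=u_{(\max R)}$. This holds even with ties, since the element of largest rank has the largest value.

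The key step is to find the distribution of the random index $M=\max R$. Because all $\binom{N}{K}$ subsets are equally likely, $\Pr(M=k)$ is the number of $K$-subsets whose largest element is exactly $k$, divided by $\binom{N}{K}$. Such a subset must contain $k$. Its remaining $K-1$ elements are chosen freely from $\{1,\dots,k-1\}$, which gives $\binom{k-1}{K-1}$ subsets. This count is zero when $k<K$, so the support of $M$ is $\{K,\dots,N\}$.

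Applying the law of total expectation (the expectation over the finite distribution of $M$) then gives $\expec{U}=\sum_{k=K}^{N}u_{(k)}\Pr(M=k)$, which is exactly \eqref{eq:expected_max}. As a sanity check, the hockey-stick identity $\sum_{k=K}^{N}\binom{k-1}{K-1}=\binom{N}{K}$ confirms that these probabilities sum to one.

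I do not expect a real obstacle. The only delicate point is the handling of ties among the $u_k$. That is resolved by the fixed tie-breaking above: the value $u_{(M)}$ is unaffected by how ties are broken, so the formula holds for arbitrary multisets. In the paper's application, one then plugs in $u_k=1/h_k$ (equivalently, the broadcast latency) and $u_k$ equal to the per-client round latency.
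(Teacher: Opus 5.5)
Your proof is correct and takes the same route as the paper: both find the distribution of the rank of the sample maximum, $\binom{k-1}{K-1}/\binom{N}{K}$ for $k=K,\dots,N$, and then sum. Your explicit count and rank-based tie-breaking make rigorous what the paper simply cites from finite-population order statistics, where its $\mathbb{P}(U=u_{(k)})$ notation is loose when values tie. A minor slip in your closing remark on the application: the paper plugs in $m_k = 1/r^{\mathrm{b}}_k$, not $1/h_k$; these order the clients the same way but are not the same values.
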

\begin{IEEEproof}
    See Appendix~\ref{app:lem_2_proof}.
\end{IEEEproof}

\subsubsection{Expected Total EC Analytical Expression}
Since the subset $\mathcal{N}_t$ of $K$ clients is sampled uniformly at random without replacement from the total $N$ clients, we can express the expected total EC in~\eqref{eq:opt_obj} as 
\begin{align}
     & T \bigg[\expec{\Eb(\lc)} + \frac{K}{N}\sum_{k=1}^N\bigg(\Eku(\qu,\lc) \nonumber \\
     & \hspace{5mm}+  I b \paren{E_k^\mathrm{C}(\qc,\qs,\lc) + E_k^\mathrm{T}(\qc,\lc)}\bigg)\bigg]. \label{eq:expected_energy}
\end{align}

To evaluate the expected broadcasting energy $\expec{\Eb(\lc)}$, 
let $\rb_k = B_b \log_2\big(1 + \frac{P_b^\mathrm{tx} h_k}{N_0 B_b}\big)$ denote the theoretical point-to-point downlink rate for a specific client $k$ based on its channel gain $h_k$. Then, the effective broadcasting rate can be equivalently expressed as $\rb = \min_{k \in \mathcal{N}_t} \curly{\rb_k}$. By taking the expectation over Eq.~\eqref{eq:energy_brod}, the expected broadcast energy $\expec{\Eb(\lc)}$ can be written as
\begin{equation}
    \expec{\frac{\dc\qmax P_b^\mathrm{tx}}{\min_{k \in \mathcal{N}_t} \curly{\rb_k}}} = \expec{\max_{k\in\mathcal{N}_t}\curly{\frac{1}{\rb_k}}} \dc\qmax P_b^\mathrm{tx}.
\end{equation}

Defining $m_k = 1/\rb_k$ and applying Lemma~\ref{lem:exp_max} we obtain 
\begin{equation}
    \expec{\Eb(\lc)} = \dc \qmax P_b^\mathrm{tx} \sum_{k=K}^{N} m_{(k)} \frac{\binom{k-1}{K-1}}{\binom{N}{K}}, \label{eq:expected_brod_energy}
\end{equation}
where $m_{(k)}$ represents the $k$-th order statistic of $\{m_k\}_{k=1}^N$. 
Substituting~\eqref{eq:expected_brod_energy} into~\eqref{eq:expected_energy} yields a fully tractable analytical expression for the expected total EC.

\subsubsection{Expected Per-Round Latency Analytical Expression}
Evaluating the expected per-round latency for the constraint in~\eqref{eq:opt_lat} follows a similar logic. From Eq.~\eqref{eq:latency_round}, the duration of a global round is dictated by the straggler client in the active subset. Defining $n_k = \tku(\qu,\lc) + I b \, \tau^\mathrm{GD}_k(\qc,\qs,\lc)$ and applying Lemma~\ref{lem:exp_max} over the set of client delays, we obtain the expected per-round latency $\expec{\tau(\qc,\qs,\qu,\lc)}$ as
\begin{equation}
    \expec{\max_{k \in \mathcal{N}_t} \curly{\tku(\qu,\lc) + I b \, \tau^\mathrm{GD}_k(\qc,\qs,\lc)}\!} \hspace{-1pt}= \hspace{-1pt}\sum_{k=K}^{N} \hspace{-1pt} n_{(k)} \frac{\binom{k-1}{K-1}}{\binom{N}{K}} \rule[-6mm]{0pt}{0pt}.
\end{equation}

\section{Simulation Results and Analysis} \label{sec:results}
In this section, we evaluate the system-level performance and tradeoffs of our proposed GQ-FSL\footnote{The simulation code is available at \url{https://github.com/RothIdan/gqfsl-green-quantized-fsl}}. 
We begin by detailing the experimental setup. We then present and analyze
quantitative results that demonstrate the effectiveness of our approach under varying constraints and network architectures.

\subsection{Experimental Setup}
\subsubsection{Network and System Configuration}

We simulate a wireless edge network consisting of $N=50$ clients uniformly deployed within a $500\times 500\ \mathrm{m}^2$ cell around a central edge server. For the channel gain $h_k$, we assume a path loss exponent of $4$. Transmission powers are set to $P_k^\mathrm{tx}=P_\mathrm{s}^\mathrm{tx}=0.1$~W, and $P_b^\mathrm{tx}=5$~W. To accommodate the maximum receiver bandwidth constraints of heterogeneous 5G reduced capability (RedCap) IoT devices \cite{3gpp_ts38306_rel17,ericsson2023redcap}, the BS allocates orthogonal subchannels of $B=4$~MHz per device for uplink and unicast downlink, while broadcasting over a primary bandwidth of $B_b=20$~MHz. We further assume $N_0=-174$ dBm/Hz and use $I_{\max}=5$ alongside $\qmax=32$~bits.
For the physical computing model, we consider heterogeneous capabilities across the edge devices. We implement a $3$-tier computational capability model,  defined by MAC units, main SRAM size, and clock frequency $(p_k, S_k, f_k)$ which are uniformly sampled at random from three profiles: 'LOW' ($16$, $0.5$~MB, $200$~MHz), 'MID' ($64$, $2$~MB, $500$~MHz), and 'HIGH' ($256$, $4$~MB, $1.2$~GHz). Furthermore, each dedicated server instance is provisioned at the 'HIGH' tier specification. For the precision-dependent energy model we use $A=3.7$~pJ, $a=1.25$, and $A_{\mathrm{d}}=150$ as in \cite{moons2017minimum_qnn}.

\subsubsection{Training Configuration and Parameterization}
We evaluate GQ-FSL across two distinct architectures to demonstrate baseline superiority and deep network scalability.

\textit{Baseline Validation (CNN on MNIST):} Following the quantized FL reference experimental setup in~\cite{kim2023greenfl}, we simulate an image classification task on the MNIST dataset, utilizing a softmax classifier to measure the FSL model performance. 
We adopt their specific convolutional neural network (CNN) structure, which comprises five convolutional layers and three fully connected layers. This results in $8$ candidate splitting points, with parameters $x_\mathrm{in}=784$, $d\approx4.07\times10^5$, $M\approx39.9\times10^6$, and $O\approx87.5\times10^3$. Here, $M$ accounts for the convolutions and fully connected layer operations, which represent the vast majority of the computational weight. The dataset is distributed across devices in a non-IID fashion by allocating labels from a Dirichlet distribution with parameter $0.1$, and the mini-batch size is set to $b=32$. To strictly align with the theoretical bounds established in \cite{kim2023greenfl}, we utilize their empirically derived landscape parameters for this specific architecture. Accordingly, we set $L = 0.097$, $\mu = 0.05$, $G = 0.25$, $\Gamma = 0.6$, and adopt $\sigma_k$ from their codebase.

\textit{Main Validation and Scalability Evaluation (ResNet-18 on CIFAR-10):} Additionally, we consider a more complex task using the CIFAR-10 dataset, where each device trains a deep QNN-based ResNet-18 architecture \cite{he2016resnet}. To accommodate the spatial resolution of CIFAR-10 images, the initial convolutional layer is modified to use a $3\times3$ kernel with a stride of $1$, and the subsequent max-pooling layer is removed. In this setting, we consider $10$ candidate splitting points as in \cite{lin2024efficient_psl}, yielding $x_\mathrm{in}=3072$, $d\approx11.2\times10^6$, $M\approx0.55\times10^9$, and $O\approx0.31\times10^6$. 
The complete splitting-specific parameters are detailed in Table~\ref{tab:resnet18_split}. 
To maintain a consistent theoretical baseline across both setups, we carry over the parameters ($L,\mu,G,\sigma_k, b\text{ and } \Gamma$) from the previous configuration. 
This standardization of the theoretical convergence landscape allows us to isolate how the physical scaling of the model, specifically computational load, memory footprint, and transmission payloads, impacts the framework's energy and latency.

\begin{table}[t]
\centering
\caption{ResNet-18 Split Parameters across Candidate Split Points: MACs, Activations, Weights, and Cut Layer Payload.}
\label{tab:resnet18_split}
\resizebox{\columnwidth}{!}{
\begin{tabular}{@{}cccccccc@{}}
\toprule
\multirow{2}{*}{$\lc$} & \multicolumn{3}{c}{\textbf{Client Submodel}} & \textbf{Trans.} & \multicolumn{3}{c}{\textbf{Server Submodel}} \\
\cmidrule(lr){2-4} \cmidrule(lr){5-5} \cmidrule(l){6-8}
& $\Mc$ & $\Oc$ & $\dc$ & $x_\text{cut}$ & $\Ms$ & $\Os$ & $\ds$ \\
& \scriptsize{($\times 10^6$)} & \scriptsize{($\times 10^3$)} & \scriptsize{($\times 10^3$)} & \scriptsize{($\times 10^3$)} & \scriptsize{($\times 10^6$)} & \scriptsize{($\times 10^3$)} & \scriptsize{($\times 10^6$)} \\
\midrule
1 & 1.77 & 65.5 & 1.9 & 65.5 & 553.66 & 245.8 & 11.17 \\
2 & 77.27 & 131.1 & 75.8 & 65.5 & 478.16 & 180.2 & 11.10 \\
3 & 152.76 & 196.6 & 149.8 & 65.5 & 402.67 & 114.7 & 11.02 \\
4 & 211.48 & 229.4 & 380.0 & 32.8 & 343.95 & 81.9 & 10.79 \\
5 & 286.98 & 262.1 & 675.4 & 32.8 & 268.45 & 49.2 & 10.50 \\
6 & 345.70 & 278.5 & 1594.4 & 16.4 & 209.73 & 32.8 & 9.58 \\
7 & 421.20 & 294.9 & 2775.1 & 16.4 & 134.23 & 16.4 & 8.40 \\
8 & 479.92 & 303.1 & 6448.2 & 8.2 & 75.51 & 8.2 & 4.73 \\
9 & 555.42 & 311.3 & 11168.8 & 8.2 & 0.01 & 0 & 0.01 \\
10 & 555.43 & 311.3 & 11174.0 & 0 & 0 & 0 & 0 \\
\bottomrule
\end{tabular}
}
\end{table}

\subsubsection{Benchmarks and Statistical Evaluation}
While the analytical optimization in Section~\ref{sec:problem_formulation} is formulated for a specific network instance, the empirical evaluations are averaged over multiple independent trials to account for diverse physical conditions. Specifically, the channel gains and the devices' compute capabilities are varied to ensure the reported performance metrics robustly reflect expected system behavior under heterogeneous real-world conditions. 

To demonstrate the efficacy of our proposed joint optimization framework, we compare GQ-FSL against two primary benchmarks. The first is quantized FL (Q-FL) \cite{kim2023greenfl}, which represents the traditional non-split approach where the entire quantized model is processed locally ($\lc=\ell$). Hence, we only optimize $(\qc,K,I)$. The second is full-precision FSL (FP-FSL), a split version of~\cite{luo2021cost_effective_fl}, which results in optimization of $(\lc, K,I)$ while the parameters are represented in maximum bit-widths ($\qc=\qs=\qu=\qmax$).

\subsection{Performance Evaluation}


Before detailing the numerical results, we must contextualize the scope of this evaluation. These simulations probe system bottlenecks and physical scaling behaviors rather than prescribe fixed hyperparameters. Because theoretical convergence bounds do not capture real-world generalization, mathematically extreme settings (e.g., minimal client participation or a single local iteration) may fail in practice. Edge deployments therefore require empirical tuning to balance system efficiency and data coverage.
Thus, the presented numerical analysis should be interpreted as  conceptual design guidelines highlighting the architecture's underlying physical tradeoffs.

\begin{figure*}[t]
    \captionsetup[subfigure]{font=small, labelformat=parens, textfont=normalfont, labelfont=normalfont}
    \centering
    \begin{subfigure}[b]{0.45\textwidth}
        \centering
        \includegraphics[width=\textwidth]{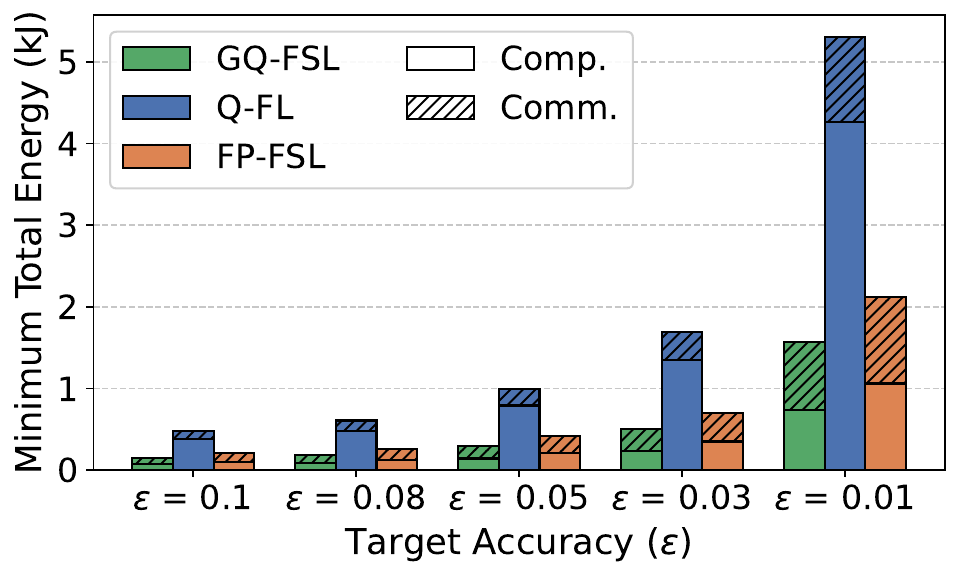}
        \caption{Baseline CNN on MNIST.}
        \label{subfig:mnist_energy}
    \end{subfigure}
    \hspace{6mm}
    \begin{subfigure}[b]{0.45\textwidth}
        \centering
        \includegraphics[width=\textwidth]{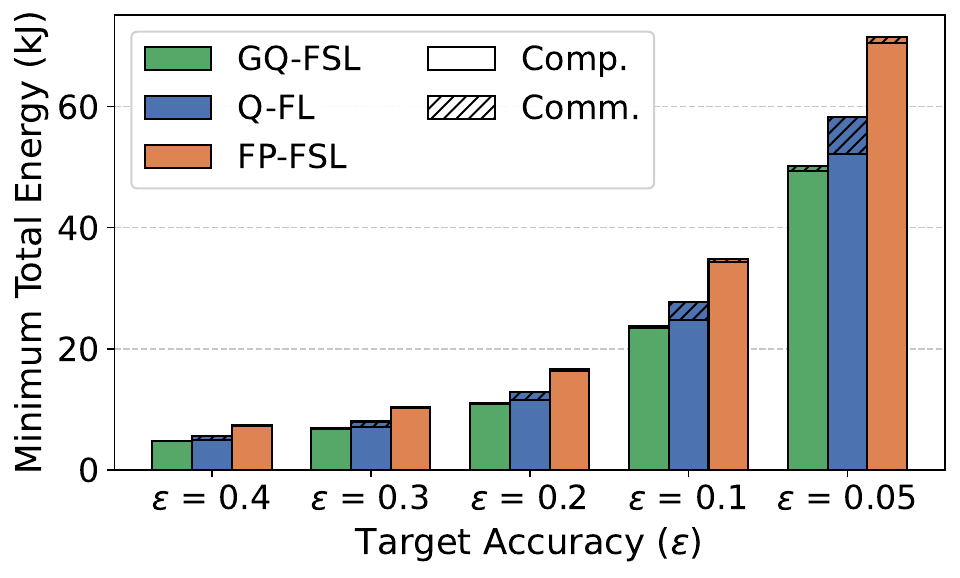}
        \caption{ResNet-18 on CIFAR-10.}
        \label{subfig:cifar_energy}
    \end{subfigure}
    \caption{Minimum total system energy consumption comparison across varying target accuracies ($\epsilon$) for an unconstrained latency setting ($\tau_{\max} \to \infty$) with fixed client participation $K = 10$. Energy is partitioned into computation and communication.}
    \label{fig:eng_acc_comp}
\end{figure*}

\begin{figure*}[t]
    \captionsetup[subfigure]{font=small, labelformat=parens, textfont=normalfont, labelfont=normalfont}
    \centering
    \begin{subfigure}[b]{0.45\textwidth}
        \centering
        \includegraphics[width=\textwidth]{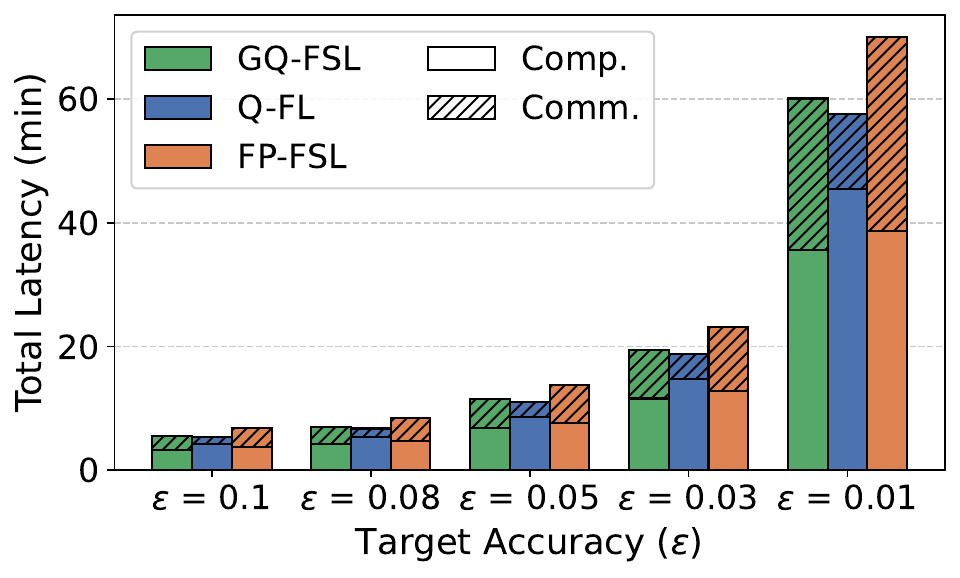}
        \caption{Baseline CNN on MNIST.}
        \label{subfig:mnist_latency}
    \end{subfigure}
    \hspace{6mm}
    \begin{subfigure}[b]{0.45\textwidth}
        \centering
        \includegraphics[width=\textwidth]{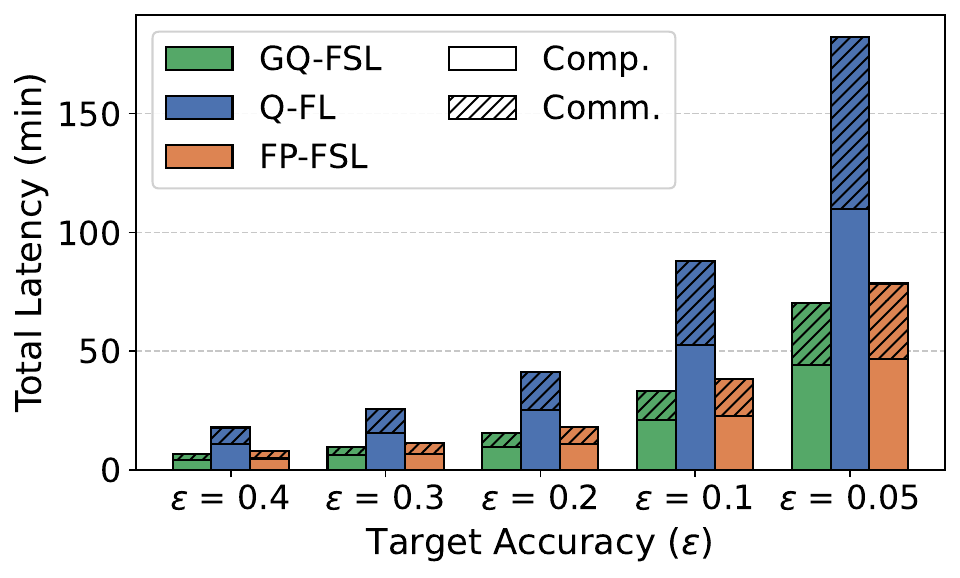}
        \caption{ResNet-18 on CIFAR-10.}
        \label{subfig:cifar_latency}
    \end{subfigure}
    \caption{Total training latency corresponding to the minimum energy configurations from Figure~\ref{fig:eng_acc_comp} across varying target accuracies ($\epsilon$) with fixed client participation ($K = 10$). Latency is partitioned into computation and communication.}
    \label{fig:lat_acc_comp}
\end{figure*}

\subsubsection{Baseline Superiority and the Energy Cost of Accuracy}

Fig.~\ref{fig:eng_acc_comp} shows the minimum total EC for GQ-FSL, Q-FL, and FP-FSL across varying target accuracy thresholds ($\epsilon$) for (a) the baseline CNN architecture on the MNIST dataset and (b) the ResNet-18 architecture on the CIFAR-10 dataset. We evaluate the frameworks under an unconstrained latency setting $(\tau_{\max} \to\infty)$ to identify the absolute minimum energy footprint of each configuration. Furthermore, the number of participating clients is fixed to $K=10$ to ensure a direct and fair comparison of structural efficiency. The results confirm GQ-FSL maintains strict energy superiority over both benchmarks at every accuracy level by leveraging its reduced-precision split architecture and computational server-side offloading. Naturally, the total EC increases as accuracy requirements tighten. At extreme target accuracy, the reduced quantization noise tolerance forces the frameworks to rely on extended global rounds and higher precision levels, drastically amplifying the total EC.

Comparing the two subplots reveals a vulnerability in standard Q-FL. For the shallow baseline CNN, Q-FL actually consumes significantly more energy than the FSL-based frameworks. Because the baseline CNN is over $96\%$ smaller than ResNet-18, the absolute energy saved per round via quantization is marginal. More importantly, fully local computation under Q-FL still exceeds the limited on-chip SRAM capacity of the lowest-tier edge devices. The FSL approach resolves this through computation workload balancing. By partitioning the the CNN at early layers ($\lc=2$), the resulting submodels fit entirely within the highly efficient SRAM of all clients and the edge server. Conversely, for massive architectures like ResNet-18, the total parameter count exceeds the SRAM capacity of even the edge server, making dominant off-chip DRAM access unavoidable. In this scaled regime, GQ-FSL pushes the split point deeper ($\lc=4$) to confine the majority of this massive data movement penalty to the server. This leaves the client-side submodels small enough to prevent excessive energy drain by better balancing the utilization of available SRAM (e.g., requiring only $0.9$~MB for weights and $0.5$~MB for activations at $\qc=19$ for $\epsilon=0.1$), and retaining only a lightweight fraction of the weights for transmission during global aggregation. Thus, whether by completely eliminating DRAM access in shallow networks or strategically offloading the DRAM burden in deep networks, GQ-FSL successfully establishes the absolute minimum energy footprint.



While Fig.~\ref{fig:eng_acc_comp} establishes the energy efficiency of the proposed framework, evaluating the practical time cost of these energy-optimal configurations is equally critical. Fig.~\ref{fig:lat_acc_comp} shows the total training latency, revealing a profound scalability advantage for GQ-FSL. For the shallow baseline CNN, standard Q-FL achieves a marginally faster training time. A time-domain breakdown explains this inversion: although server offloading successfully reduces local computation time, the energy-optimal split ($\lc=2$) requires transmitting a large intermediate activation payload ($x_\text{cut}=6400$). Although they are quantized in the forward pass, the returning backward gradients demand full-precision transmission. For this compact architecture, this asymmetrical communication overhead eclipses the computational time savings. 
However, this communication-computation tradeoff fundamentally inverts when scaling to deep networks. For the ResNet-18 architecture, the massive local computation requirements and the sheer volume of the full-model wireless payload completely overwhelm the standard Q-FL approach. By structurally partitioning the architecture ($\lc=4$) and offloading the computational bulk to the capable edge server, GQ-FSL simultaneously cuts the physical computation time by $60\%$ and the communication delay by $64\%$ at $\epsilon=0.05$. Consequently, even when strictly optimizing for minimum energy, GQ-FSL inherently yields superior practical training times for deep networks, proving that reduced-precision split learning is essential for scalable edge deployments.



\subsubsection{Energy-Latency Tradeoff and Optimal Parameter Analysis} \label{sec:pareto_analysis}

\begin{figure}[t]
    \centering
    \includegraphics[width=0.45\textwidth]{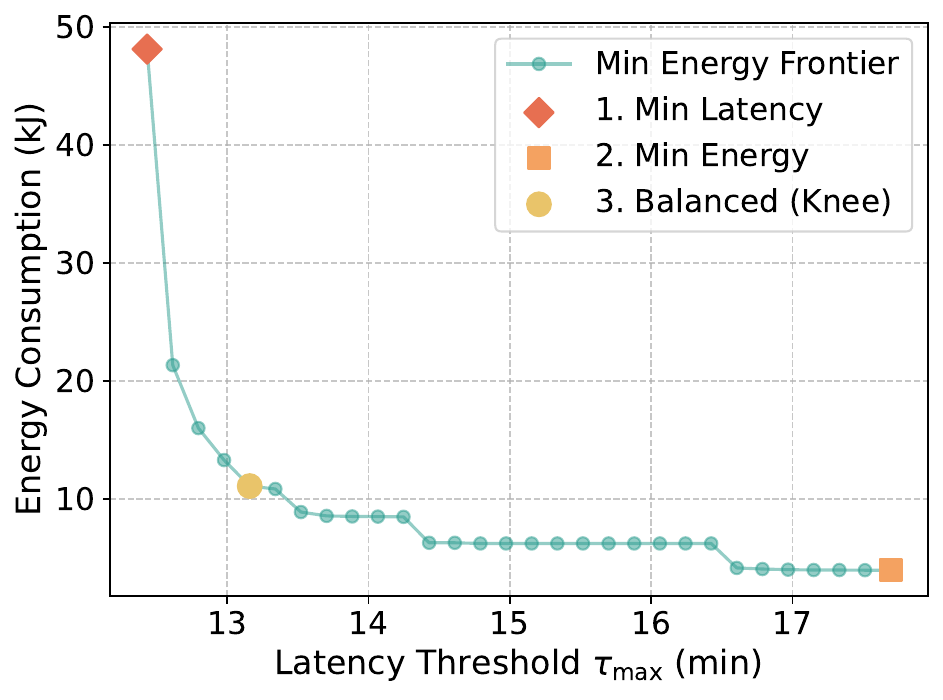}
    \caption{Minimum total system energy consumption across varying total training latency threshold ($\tau_{\max}$) for the ResNet-18 architecture on target accuracy $\epsilon = 0.1$.}
    \label{fig:eng_lat_pareto}
\end{figure}

For the remainder of our evaluation, we focus exclusively on the ResNet-18 architecture to provide a more practical and rigorous representation of modern edge AI workloads. Figure~\ref{fig:eng_lat_pareto} shows the optimal energy-latency Pareto frontier for GQ-FSL under a target accuracy constraint of $\epsilon=0.1$. The curve exposes a fundamental physical tradeoff between training latency and total EC, highlighted by the extreme energy penalty required to operate at the absolute minimum latency limit. To comprehensively understand the system dynamics driving this tradeoff, we analyze the optimal control variables at three critical points along the frontier: the minimum latency point, the minimum energy point, and a balanced knee point. The corresponding optimal configurations are detailed in Table~\ref{tab:optimal_points}. Before examining the distinct operating states, a structural consistency stands out across all three points: the optimizer selects the earliest possible split point ($\lc=1$). By offloading the maximum allowable parameter space to the capable edge server, the framework restricts the client workload to the absolute minimum. While a more balanced client-server hardware allocation can shift the optimal split point deeper into the architecture (e.g., $\lc=4$ for the minimum energy point under symmetric compute capacities \cite{gqfsl_spawc2026}), the aggressive offloading observed here directly neutralizes the hardware bottlenecks of low-tier devices, making GQ-FSL an exceptional fit for practical and highly constrained edge deployments.

\begin{table}[t]
\centering
\caption{Optimal Parameters and Performance Metrics Across the Pareto Frontier in Figure~\ref{fig:eng_lat_pareto}.}
\label{tab:optimal_points}
\resizebox{\columnwidth}{!}{%
\begin{tabular}{c c c c c}
\toprule
\textbf{} & \textbf{Min Latency} & \textbf{Balanced} & \textbf{Min Energy} & \textbf{Min Rounds} \\
\midrule
\multicolumn{5}{c}{\textit{Optimal Control Variables}} \\
\midrule
$\lc$ & 1 & 1 & 1 & 5 \\
$\qc$ & 13 & 14 & 16 & 28 \\
$\qs$ & 22 & 21 & 19 & 29 \\
\vspace{0.2mm}$\qu$ & 9 & 9 & 11 & 16 \\
$K$ & 19 & 4 & 1 & 50 \\
$I$ & 1 & 1 & 1 & 1 \\
\midrule
\multicolumn{5}{c}{\textit{Convergence and Performance Outcomes}} \\
\midrule
$T$ & 281 & 317 & 479 & 264 \\
\midrule
\textbf{Energy (kJ)} & \textbf{48.12} & \textbf{11.10} & \textbf{3.97} & \textbf{146.69} \\
\hspace{-1mm}\text{\raisebox{0.5ex}{$\llcorner$}} Comp. & 47.09 & 10.85 & 3.87 & 144.60 \\
\text{\raisebox{0.5ex}{$\llcorner$}} Comm. & 1.03 & 0.25 & 0.10 & 2.10 \\
\midrule
\textbf{Latency (s)} & \textbf{746.1} & \textbf{789.6} & \textbf{1061.6} & \textbf{2705.6} \\
\hspace{-1mm}\text{\raisebox{0.5ex}{$\llcorner$}} Comp. & 60.7 & 62.4 & 79.3 & 1800.9 \\
\text{\raisebox{0.5ex}{$\llcorner$}} Comm. & 685.3 & 727.1 & 982.4 & 904.7 \\
\bottomrule
\end{tabular}%
}
\end{table}


At the minimum latency point (Point 1), the optimizer drives rapid training by selecting a moderately sized client pool ($K=19$). This participation level ensures sufficient data diversity to keep the required global rounds relatively low. Additionally, it strategically reduces the inclusion risk of the weakest devices, thereby preventing straggler-induced communication delays. By relaxing the time constraint by less than a minute, the balanced knee point (Point 3) orchestrates a massive $77\%$ reduction in total EC. The optimizer achieves this primarily by shrinking client participation to $K=4$ and shifting one bit of precision from the server ($\qs=21$) to the clients ($\qc=14$). Because the vast majority of the computational load resides at the server, reducing the server precision by a single bit yields tremendous energy savings. Simultaneously, the framework cheaply increases the precision of the lightweight client-side submodel to stabilize the convergence rate. While dropping clients naturally increases the required global rounds, the massive reduction in parallel computation and active transmission energy completely overshadows the minor penalty of the extra training time.

Finally, at the minimum energy extreme (Point 2), the loose latency constraint allows the optimizer to drop participation to a single client ($K=1$), serializing the workload and maximizing the global rounds. This collapse into a sequential SL process perfectly aligns with the analytical expected energy objective formulated in~\eqref{eq:expected_energy}, which is monotonically increasing with respect to $K$. Specifically, the right-hand side is linear in $K$, and the server's expected broadcasting energy is dictated by the worst-link bottleneck of the selected clients. Sampling a larger pool statistically increases the probability of incorporating a device with poor channel conditions, inherently inflating the required transmission energy. Furthermore, beyond minimizing participation, this relaxed latency constraint enables a major rebalancing of the precision levels. The optimizer drops the server precision down to $\qs=19$ while raising the client precision to $\qc=16$. 

The strategic precision exchange observed as the latency constraint loosens reflects a fundamental shift in the dominant system bottleneck, validating the theoretical properties established in Theorem~\ref{thm:convergence}. 
Under strict time constraints, Table~\ref{tab:optimal_points} shows that wireless communication is the primary limiting factor. Thus, the optimizer minimizes client ($\qc$) and model-update ($\qu$) precisions to accelerate transmission while keeping server precision ($\qs$) high to control the convergence error floor. In contrast, as relaxed latency shifts the bottleneck to server-side computation, the optimizer shrinks $\qs$ to extract massive energy savings while compensating for the increased quantization noise by raising $\qc$ and $\qu$ at a minimal cost. Notably, a uniform precision restriction ($\qc=\qs$) could not achieve these energy savings as it fails to effectively navigate the energy-latency tradeoff under a single global bit-width. By decoupling device-level hardware constraints from server computational capabilities, GQ-FSL's extra degree of freedom grants the optimizer greater flexibility to achieve optimal states unreachable under uniform constraints.

\subsubsection{Training Latency vs. Convergence}
Prior quantized FL frameworks, such as \cite{kim2023greenfl}, fundamentally evaluate system efficiency as a tradeoff between EC and the number of global rounds $T$. Building upon that energy-convergence tradeoff logic, minimizing convergence requires high client participation and high precision levels (as exemplified in Theorem~\ref{thm:convergence}), which in turn maximizes EC and vice versa. However, relying purely on global rounds obscures the true physical bottlenecks of practical edge deployments. To empirically demonstrate the critical distinction between mathematical convergence and physical training latency, we isolated the absolute minimum global rounds required for convergence, denoted as Min Rounds in Table~\ref{tab:optimal_points}.

Achieving this mathematical minimum of $T=264$ indeed forces all $K=N$ clients to participate using near-maximum compute precision levels ($\qc=28, \qs=29$). Under conventional evaluation, this state might appear optimal for training speed. Yet, translating this to physical latency exposes a severe bottleneck: the minimum global rounds configuration takes over $45$ minutes ($2705.6$ seconds) to complete. In contrast, the minimum latency configuration requires $17$ additional global rounds ($T=281$) but finishes in only $12.4$ minutes.

This disparity reveals a counterintuitive dynamic: a theoretical deceleration in convergence does not inherently translate to a degradation in practical training latency. By intentionally dropping participation ($K=19$) and reducing precision levels ($\qc=13,\qs=22,\qu=9$), the minimum latency configuration degrades the mathematical convergence rate to achieve practical training acceleration. While reducing precision and participation inevitably increases $T$, it simultaneously shrinks both local computation time and wireless transmission payloads. This dual reduction compresses the physical execution time of every single round. Furthermore, attempting to minimize $T$ by simply increasing $K$ yields diminishing returns. Because round synchronization is strictly bottlenecked by the slowest device, pushing $K$ higher inevitably triggers the straggler effect and compounds delays across the limited computation capabilities and wireless bandwidth. 

Notably, mapping this complex relationship exposes a fundamental advantage: the optimal control parameters for minimizing energy and minimizing latency are closely aligned. Reducing precision and limiting client participation inherently drives down both the total energy footprint and the per-round execution time as opposed to convergence. This alignment allows the system to identify highly efficient configurations that strike a superior balance between the two metrics, perfectly exemplified by the aforementioned balanced knee point. This proves the absolute necessity of energy-latency tradeoff analysis of joint optimization for practical edge networks.

\section{Conclusion} \label{sec:conclusion}
This paper proposed the GQ-FSL framework to advance green edge AI and overcome the resource bottlenecks of deploying large-scale DNNs at the wireless edge. By integrating FSL and stochastic quantization with distinct precision levels for server- and client-side submodel computations, as well as the uplink communication of model updates, our framework successfully decouples strict client-level hardware constraints from global model accuracy. We formulated a joint optimization problem rooted in a rigorous theoretical convergence bound to strategically configure the network splitting point, precision levels, client participation, and local iterations. By systematically analyzing this optimization, we exposed a fundamental energy-latency tradeoff and established critical guidelines for practical edge deployments. 
Extensive simulations validate our theoretical analysis, confirming that GQ-FSL achieves superior energy efficiency and practical latency management compared to standard quantized FL and full-precision FSL architectures. Ultimately, this approach provides a highly sustainable framework for edge intelligence, enabling resource-constrained devices to actively participate in complex collaborative training. Possible directions for future work include exploring asynchronous model aggregation to further alleviate straggler bottlenecks, as well as extending the analytical framework beyond static and uniform settings to support online dynamic adaptation under time-varying channels alongside client-specific configurations tailored to individual device capabilities.


\begin{appendices}

\section{Proof of Theorem~\ref{thm:convergence}} \label{app:thm_1_proof}

\subsection{Additional Notations}
Following standard practice in literature \cite{li2020flconvergence, zheng2020design, kim2023greenfl}, we redefine $t$, throughout the remainder of this proof as the local iteration index, with a slight abuse of notation. Thus, $\wkt$ denotes the model parameters of the $k$-th client at local iteration $t$. We introduce an auxiliary variable $\vkto$ to represent the intermediate local model state resulting from a single mini-batch SGD step on $\wkt$, given by 
\begin{equation}
    \vkto= \wkt - \eta_t \nabla F_k(\bs{\tilde{w}}_{k}^{t}, \xi_k^{t}), \label{eq:aux_variable}
\end{equation}
where $ \bs{\tilde{w}}_{k}^{t} = Q\big(\wkt\big) = 
        \begin{bmatrix}
            Q_\mathrm{c}(\wkc^t) \vspace{0.2mm} \\
            Q_\mathrm{s}(\wks^t)
        \end{bmatrix}$, $\vkto= 
        \begin{bmatrix}
            \bs{v}_{k,\mathrm{c}}^{t+1} \vspace{0.2mm}\\
            \bs{v}_{k,\mathrm{s}}^{t+1}
        \end{bmatrix}$.
    
Since each client performs $I$ local iterations per global round, the model strictly follows the local update $\wkto=\vkto$ for all $t+1\notin\mathcal{I}$, where $\mathcal{I}=\{jI \ |\ j=1,2,\dotsc\}$ is the set of global aggregation steps. Conversely, when $t+1\in\mathcal{I}$, each client transmits the quantized client-side submodel update $\bs{\tilde{d}}_{k,\mathrm{c}}^{\raisebox{-0.6ex}{$\scriptstyle \, t+1$}}=Q_\mathrm{u}\big(\bs{d}_{k,\mathrm{c}}^{\,t+1}\big)$ for global aggregation. Only at these specific instances, the updated global model $\bs{w}^{t+1}$ is meaningful, and is explicitly formed as $\bs{w}^{t+1} = \bs{w}^{t+1-I} + \frac{1}{K}\sum_{k \in \mathcal{N}_{t+1-I}} \bs{\tilde{d}}_{k}^{\raisebox{-0.6ex}{$\scriptstyle \, t+1$}}$, where $\bs{w}^{t+1-I}$ is the previous global model generated at the BS. Combining these two conditions, the outcome of the $(t+1)$-th iteration is formally expressed as
\begin{equation}
    \hspace{2.5mm}\wkto = 
    \begin{cases}
        \vkto \hspace{38mm} \text{if} \quad t+1 \not\in \mathcal{I}, \\
        \displaystyle \bs{w}^{t+1-I} + \frac{1}{K}\sum_{k \in \mathcal{N}_{t+1-I}} \bs{\tilde{d}}_{k}^{\raisebox{-0.6ex}{$\scriptstyle \, t+1$}} \quad \text{if} \quad t+1 \in \mathcal{I}.
    \end{cases} \label{eq:model_update}
\end{equation} 
with
    \begin{equation*}
    \bs{\tilde{d}}_{k}^{\raisebox{-0.6ex}{$\scriptstyle \, t+1$}} = \begin{bmatrix}
        \bs{\tilde{d}}_{k,\mathrm{c}}^{\raisebox{-0.6ex}{$\scriptstyle \, t+1$}} \vspace{0.2mm} \\
        \bs{d}_{k,\mathrm{s}}^{\,t+1}
    \end{bmatrix} =
    \begin{bmatrix}
        Q_\mathrm{u}\big(\bs{v}_{k,\mathrm{c}}^{t+1} - \bs{w}_{\mathrm{c}}^{t+1-I}\big) \vspace{0.2mm} \\
        \bs{v}_{k,\mathrm{s}}^{t+1} - \bs{w}_{\mathrm{s}}^{t+1-I}
    \end{bmatrix}.
    \end{equation*}
Note that the server-side submodel update $\bs{d}_{k,\mathrm{s}}^{\,t+1}$ does not undergo quantization because it already resides on the server and is aggregated in full precision.

Finally, we define two additional auxiliary variables: $\bs{\bar{v}}^t = \frac{1}{N}\sum_{k=1}^N \vkt$, and $\bs{\bar{w}}^t = \frac{1}{N}\sum_{k=1}^N \wkt$. For convenience, we also denote $\bs{\delta}^t=\frac{1}{N}\sum_{k=1}^N \nabla F_k(\bs{\tilde{w}}_{k}^{t}, \xi_k^{t})$, and $\bs{\bar{\delta}}^t = \frac{1}{N}\sum_{k=1}^N \nabla F_k(\bs{\tilde{w}}_{k}^{t})$.

\subsection{Result for One Local Iteration}  \label{app:local_iter}
To prove Theorem~\ref{thm:convergence}, we first establish a lemma that bounds the expected optimality gap after a single local training iteration, building upon the proof in~\cite{kim2023greenfl}.

\begin{lemma}\label{lem:local_iter}
    Under Assumption~\ref{asm:loss_sg}, we have
    \begin{align}
        \expec{\norm{\bs{\bar{v}}^{t+1}-\bs{w}^*}^2} & \leq (1-\mu\eta_t)\expec{\norm{\bs{\bar{w}}^t-\bs{w}^*}^2} \nonumber \\
        &+ \eta_t(\alpha-\mu)\left(\frac{\dc}{2^{2\qc}} + \frac{\ds}{2^{2\qs}} \right) + \eta_t\frac{2L\Gamma}{\alpha}\nonumber \\
        &+ \eta_t^2\left(\!2L\Gamma \!+\! \sum_{k=1}^N \!\frac{\sigma_k^2}{N^2}\! \right)\! + 4\alpha\eta_t^3(I-1)^2G^2. \label{eq:one_iter}
    \end{align}
\end{lemma}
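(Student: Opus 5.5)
We follow the standard one-step analysis of local SGD from \cite{li2020flconvergence}, as adapted to QNN training in \cite{kim2023greenfl}. Since $\bs{\bar{v}}^{t+1}=\bs{\bar{w}}^t-\eta_t\bs{\delta}^t$, we split
\begin{equation*}
\norm{\bs{\bar{v}}^{t+1}-\bs{w}^*}^2=\norm{\bs{\bar{w}}^t-\bs{w}^*-\eta_t\bs{\bar{\delta}}^t}^2+\eta_t^2\norm{\bs{\delta}^t-\bs{\bar{\delta}}^t}^2+2\eta_t\iprod{\bs{\bar{w}}^t-\bs{w}^*-\eta_t\bs{\bar{\delta}}^t}{\bs{\bar{\delta}}^t-\bs{\delta}^t}.
\end{equation*}
The cross term vanishes in expectation over the mini-batch sampling, conditional on $\{\wkt\}$ and the quantization randomness. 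Because the clients' SG noises are independent, the Assumption~\ref{asm:loss_sg} variance bound gives the term $\eta_t^2\sum_k\sigma_k^2/N^2$. All remaining work is on $A=\norm{\bs{\bar{w}}^t-\bs{w}^*-\eta_t\bs{\bar{\delta}}^t}^2$.

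Expand $A=\norm{\bs{\bar{w}}^t-\bs{w}^*}^2-\frac{2\eta_t}{N}\sum_k\iprod{\bs{\bar{w}}^t-\bs{w}^*}{\nabla F_k(\wktq)}+\eta_t^2\norm{\bs{\bar{\delta}}^t}^2$. We write $\bs{\bar{w}}^t-\bs{w}^*=(\bs{\bar{w}}^t-\wktq)+(\wktq-\bs{w}^*)$.

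For the piece $-2\eta_t\iprod{\wktq-\bs{w}^*}{\nabla F_k(\wktq)}$, use $\mu$-strong convexity. This gives $-2\eta_t\paren{F_k(\wktq)-F_k(\bs{w}^*)}-\mu\eta_t\norm{\wktq-\bs{w}^*}^2$.

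For the piece $-2\eta_t\iprod{\bs{\bar{w}}^t-\wktq}{\nabla F_k(\wktq)}$, this is where the new parameter $\alpha$ enters. Use Young's inequality in the form $2\iprod{a}{b}\le\alpha\norm{a}^2+\frac1\alpha\norm{b}^2$. Then bound $\norm{\nabla F_k(\wktq)}^2\le2L\paren{F_k(\wktq)-F_k^*}$ by $L$-smoothness.

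Next, expand $\norm{\bs{\bar{w}}^t-\wktq}^2=\norm{\bs{\bar{w}}^t-\wkt}^2+\norm{\wkt-\wktq}^2$ in expectation, using the unbiasedness in Lemma~\ref{lem:quatization}. Likewise, $\mathbb{E}\norm{\wktq-\bs{w}^*}^2=\mathbb{E}\norm{\wkt-\bs{w}^*}^2+\mathbb{E}\norm{\wktq-\wkt}^2$. The $-\mu\eta_t$ and $+\alpha\eta_t$ coefficients on the quantization variance then combine into $\eta_t(\alpha-\mu)$. Applying \eqref{eq:fsl_quant_err} turns this into the $Z=\frac{\dc}{2^{2\qc}}+\frac{\ds}{2^{2\qs}}$ term.

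Convexity of $\norm{\cdot}^2$ gives $\frac1N\sum_k\norm{\wkt-\bs{w}^*}^2\ge\norm{\bs{\bar{w}}^t-\bs{w}^*}^2$. This step needs care with the sign: we keep $-\mu\eta_t\frac1N\sum_k\norm{\wkt-\bs{w}^*}^2$, which is at most $-\mu\eta_t\norm{\bs{\bar{w}}^t-\bs{w}^*}^2$. That yields the contraction factor $(1-\mu\eta_t)$.

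The divergence term $\alpha\eta_t\frac1N\sum_k\mathbb{E}\norm{\bs{\bar{w}}^t-\wkt}^2$ is bounded by $4\eta_t^2(I-1)^2G^2$ using the standard argument. Since the last synchronization was at most $I-1$ steps ago, the local models drift by at most $I-1$ SG steps, and $\eta_{t_0}\le2\eta_t$ because $\gamma\ge I$. Multiplying by $\alpha\eta_t$ gives $4\alpha\eta_t^3(I-1)^2G^2$.

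The function-value terms need the most bookkeeping, and this is the main obstacle. After the steps above we have $\eta_t^2\norm{\bs{\bar{\delta}}^t}^2\le\frac{2L\eta_t^2}{N}\sum_k\paren{F_k(\wktq)-F_k^*}$. Collecting everything, the loss terms read
\begin{equation*}
\frac{1}{N}\sum_k\sqb{\paren{2L\eta_t^2+\tfrac{2L\eta_t}{\alpha}}\paren{F_k(\wktq)-F_k^*}-2\eta_t\paren{F_k(\wktq)-F_k(\bs{w}^*)}}.
\end{equation*}
We rewrite $F_k(\wktq)-F_k(\bs{w}^*)=\paren{F_k(\wktq)-F_k^*}-\paren{F_k(\bs{w}^*)-F_k^*}$, so the coefficient of $F_k(\wktq)-F_k^*$ becomes $2\eta_t\paren{L\eta_t+L/\alpha-1}$. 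The hypothesis $\eta_1\le\frac{\alpha-L}{\alpha L}$, together with $\eta_t\le\eta_1$, makes this coefficient nonpositive, so these terms can be dropped.

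What remains is $\paren{2L\eta_t^2+\frac{2L\eta_t}{\alpha}}\frac1N\sum_k\paren{F_k(\bs{w}^*)-F_k^*}=\paren{2L\eta_t^2+\frac{2L\eta_t}{\alpha}}\Gamma$, which gives exactly the $\Gamma$ terms in \eqref{eq:one_iter}. The delicate points are that this cancellation must work for every $t$, and that the factor $\frac{2L\eta_t}{\alpha}$ replaces the invalid $\rho\eta_t\ge1$ assumption of \cite{kim2023greenfl}. Finally, we take total expectation and sum all the pieces.
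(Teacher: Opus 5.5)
Your decomposition matches the paper's step for step: the $A_1/A_2/A_3$ split, strong convexity on $\iprod{\wktq-\bs{w}^*}{\nabla F_k(\wktq)}$, Young's inequality with weight $\alpha$ on the drift inner product, unbiasedness of the quantizer to split off $\expec{\norm{\wktq-\wkt}^2}$, Jensen for the contraction, and the drift lemma of \cite{li2020flconvergence}. All of that is fine.

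The gap is in the final bookkeeping of the function-value terms. Put $c=2L\eta_t^2+\frac{2L\eta_t}{\alpha}$. After your substitution $F_k(\wktq)-F_k(\bs{w}^*)=\paren{F_k(\wktq)-F_k^*}-\paren{F_k(\bs{w}^*)-F_k^*}$, the collected terms are
\begin{equation*}
\paren{c-2\eta_t}\frac{1}{N}\sum_{k=1}^N\paren{F_k(\wktq)-F_k^*}+2\eta_t\frac{1}{N}\sum_{k=1}^N\paren{F_k(\bs{w}^*)-F_k^*}.
\end{equation*}
The piece you split off comes from the $-2\eta_t(\cdot)$ term, not from the $c(\cdot)$ term. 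Dropping the nonpositive first term therefore leaves $2\eta_t\Gamma$, not $c\,\Gamma$. The step-size hypothesis is exactly $c\le 2\eta_t$, so you have only proved the weaker inequality with $2\eta_t\Gamma$ in place of $\eta_t\frac{2L\Gamma}{\alpha}+2L\eta_t^2\Gamma$. This is not cosmetic. It would give $\psi_1=(\alpha-\mu)Z+2\Gamma$, which removes the $1/\alpha$ decay of the heterogeneity term that Corollary~\ref{cor:optimal_alpha} optimizes. It also leaves an error floor of $L\Gamma/\mu$ even in the full-precision limit.

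To reach $c\,\Gamma$ you must pivot on $F_k(\bs{w}^*)$ instead, as the paper does. There the same loss terms are written as $\paren{c-2\eta_t}\frac{1}{N}\sum_k\paren{F_k(\wktq)-F_k(\bs{w}^*)}+c\,\Gamma$, and the first term is discarded. Be aware that this discard needs $\frac{1}{N}\sum_k F_k(\wktq)\ge F(\bs{w}^*)$. That holds when all the $\wktq$ coincide. It is not implied by Assumption~\ref{asm:loss_sg} once the local models have drifted and been quantized independently: if each $\wktq$ is near the minimizer of $F_k$, the average is near $F(\bs{w}^*)-\Gamma$. The paper asserts this step without justification, so neither write-up closes it as given. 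A complete proof of the stated coefficients needs an extra lower bound on $\frac{1}{N}\sum_k F_k(\wktq)-F(\bs{w}^*)$, for example via convexity around $\bs{\bar{w}}^t$ as in \cite{li2020flconvergence}. That route introduces additional drift, quantization-variance and $\Gamma$ terms that do not appear in \eqref{eq:one_iter}.
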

\begin{IEEEproof}
    From~\eqref{eq:aux_variable}, we have $\bar{\bs{v}}^{t+1} = \bs{\bar{w}}^t-\eta_t\bs{\delta}^t$. Therefore, we can write
    \begin{align}
    \label{eq:lemma3_non_expect}
        &\norm{\bar{\bs{v}}^{t+1}-\bs{w}^*}^2 = \underbrace{\norm{\bs{\bar{w}}^t-\bs w^* -\eta_t\bs{\bar{\delta}}^t}^2}_{A_1} \nonumber \\ 
        & \hspace{12mm} +2\eta_t \underbrace{\iprod{\bs{\bar{w}}^t-\bs w^* -\eta_t\bs{\bar{\delta}}^t}{\bs{\bar{\delta}}^t-\bs\delta^t}}_{A_2} + \underbrace{\eta_t^2\norm{\bs\delta^t-\bs{\bar{ \delta}}^t}^2}_{A_3}.
    \end{align}
    When taking the expectation, the term $A_2$ evaluates to zero since $\expec{\bs\delta^t}=\bs{\bar{\delta}}^t$. 
    
    To upper bound $A_3$, we expand the squared norm and apply $\expec{\nabla F_k(\bs{\tilde{w}}_{k}^{t}, \xi_k^{t})}=\nabla F_k(\bs{\tilde{w}}_{k}^{t})$ alongside Assumption~\ref{asm:loss_sg}, adapting~\cite[Eq.~63]{kim2023greenfl} under uniform client sampling, which yields
    \begin{equation}
        \eta_t^2\expec{\norm{\bs\delta^t-\bs{\bar{\delta}}^t}^2} \leq \eta_t^2\sum_{k=1}^N \frac{\sigma_k^2}{N^2}. \label{eq:A3}
    \end{equation}
    
    Next, we upper bound $A_1 $. By expanding the norm, we get
    \begin{equation}
        A_1 = \norm{\bs{\bar{w}}^t-\bs w^*}^2 \underbrace{-2\eta_t\iprod{\bs{\bar{w}}^t-\bs w^*}{\bs{\bar{ \delta}^t}}}_{B_1}+\underbrace{\eta_t^2\big\lVert \bs{\bar{\delta}}^t\big\rVert^2}_{B_2}. \label{eq:A1}
    \end{equation}
    Applying Jensen's inequality and the L-smoothness property of the loss function, where $\norm{\nabla F_k(\bs{\tilde{w}}_{k}^{t})}^2\!\leq\hspace{-0.5pt}2L(F_k(\bs{\tilde{w}}_{k}^{t})-F_k^*)$, $B_2$ is upper bounded as \cite[Eq.~(56)]{kim2023greenfl} 
    \begin{equation}
        B_2 \leq 2L\eta_t^2\sum_{k=1}^N \frac{1}{N}\left( F_k\left(\bs{\tilde{w}}_{k}^{t}\right)-F_k^* \right). \label{eq:B2}
    \end{equation}
    To upper bound $B_1$, we expand it using the definitions of $\bs{\bar{\delta}}^t$, to write \cite[Eq.~(53)]{kim2023greenfl}
    \begin{align}
        B_1 = &\underbrace{-2\eta_t\sum_{k=1}^N \frac{1}{N} \iprod{\bs{\bar{w}}_t-\wktq}{\nabla F_k\left(\wktq\right)}}_{C_1} \, \nonumber \\ &\underbrace{-2\eta_t\sum_{k=1}^N \frac{1}{N} \iprod{\wktq-\bs w^*}{\nabla F_k\left(\wktq\right)}}_{C_2}
    \end{align}
    Using $\mu$-convexity of the loss function, we have \cite[Eq.~(55)]{kim2023greenfl}
    \begin{align}
        C_2 \leq &-2\eta_t\sum_{k=1}^N \frac{1}{N}\left(F_k\left(\wktq\right) -F_k(\bs w^*) \right) \nonumber \\
        &- \mu\eta_t\sum_{k=1}^N \frac{1}{N}\norm{\wktq-\bs w^*}^2. \label{eq:C2}
    \end{align}

    At this stage, we diverge from the proof in~\cite{kim2023greenfl} to upper bound $C_1$. By introducing a tunable parameter $\alpha>0$, we establish a framework that allows us to derive the tightest possible theoretical guarantee as discussed in Section~\ref{sec:convergence}. Applying the Cauchy-Schwarz and AM-GM inequalities, we have for $\alpha>0$:
    \begin{align}
        -\iprod{\hspace{-0.8pt}\bs{\bar{w}}^t\!-\wktq\hspace{-0.8pt}}{\!\nabla F_k\left(\wktq\right)\hspace{-1pt}} &\!\leq\! \sqrt{\alpha}\norm{\wktq-\bs{\bar{w}}^t}\frac{1}{\sqrt{\alpha}}\norm{\nabla F_k\left(\wktq\right)} \nonumber \\ 
        &\!\leq\! \frac{\alpha}{2}\hspace{-1pt}\norm{\wktq\!-\bs{\bar{w}}^t}^2 \!+\! \frac{1}{2\alpha}\hspace{-0.8pt}\norm{\nabla F_k\left(\wktq\right)\hspace{-0.8pt}}^2 . \label{eq:cs_amgm}
    \end{align}
    Using~\eqref{eq:cs_amgm} and the $L$-smoothness property, the upper bound of $C_1$ is given by
    \begin{equation}
        C_1 \leq \alpha\eta_t\sum_{k=1}^N \hspace{-1.1pt}\frac{1}{N}\norm{\wktq-\bs{\bar{w}}^t}^2 + 2L\frac{\eta_t}{\alpha}\sum_{k=1}^N \hspace{-1.1pt}\frac{1}{N} \left(F_k\left(\wktq\right) -F_k^* \right). \label{eq:C1}
    \end{equation}

    Substituting Eqs.~\eqref{eq:B2}, \eqref{eq:C2}, and \eqref{eq:C1} into Eq.~\eqref{eq:A1}, $A_1$ can be upper bounded by
    \begin{align}
        A_1 &\leq \norm{\bs{\bar{w}}^t-\bs w^*}^2 + \underbrace{\alpha\eta_t\sum_{k=1}^N \frac{1}{N}\norm{\wktq-\bs{\bar{w}}^t}^2}_{D_1} \nonumber \\
        &+ \underbrace{2L\eta_t\left(\frac{1}{\alpha}+ \eta_t\right)\sum_{k=1}^N \frac{1}{N}\left( F_k\left(\wktq\right)-F_k^* \right)}_{D_{2a}} \nonumber \\
        &\underbrace{-2\eta_t\hspace{-0.9pt}\sum_{k=1}^N \!\frac{1}{N}\hspace{-0.3pt}\left(F_k\left(\wktq\right) \!-\!F_k(\bs w^*) \hspace{-0.9pt}\right)}_{D_{2b}} \underbrace{-\mu\eta_t \hspace{-0.9pt}\sum_{k=1}^N \! \frac{1}{N}\hspace{-0.2pt}\norm{\wktq\hspace{-1pt}-\hspace{-1pt}\bs w^*\hspace{-0.9pt}}^2}_{D_3}\hspace{-0.8pt}. \label{eq:A1_ub}
    \end{align}
    To upper bound $D_2\hspace{-0.5pt}=\hspace{-0.5pt}D_{2a}+D_{2b}$, we enforce $ 2L\eta_t\left(\frac{1}{\alpha}+ \eta_t\right) \leq 2\eta_t$. Thus, for $\eta_t \leq \frac{\alpha-L}{\alpha L}$ with $\alpha > L$, we derive the bound as
    \begin{align}
        D_2 &\leq 2L\eta_t\left(\frac{1}{\alpha} + \eta_t\hspace{-1pt}\right) \sum_{k=1}^N \frac{1}{N}\left( F_k(\bs w^*) -F_k^* \right) \nonumber \\
        &= \frac{2L\eta_t \Gamma}{\alpha} + 2L\eta_t^2\Gamma \label{eq:D2},
    \end{align}
    where $\Gamma =  F(\bs{w}^*)-\frac{1}{N}\!\sum_{k=1}^NF_k^* $ is the degree of data heterogeneity.
    For $\eta_t\leq2\eta_{t+I}$, we next obtain the upper bound of $D_1$ as
    \begin{align}
         D_1 &= \alpha\eta_t\sum_{k=1}^N \frac{1}{N}\norm{\wktq-\wkt}^2 + \alpha\eta_t\sum_{k=1}^N \frac{1}{N}\norm{\wkt-\bs{\bar{w}}^t}^2 \nonumber \\
         &\leq \alpha\eta_t\sum_{k=1}^N \frac{1}{N}\norm{\wktq-\wkt}^2 + 4\alpha\eta_t^3(I-1)^2G^2, \label{eq:D1}
     \end{align}
     where the first equality holds since, under expectation, the inner product $\iprod{\wktq-\wkt}{\wkt-\bs{\bar{w}}^t}$ equals zero based on Lemma~\ref{lem:quatization}, and the final inequality follows from the result in~\cite[Lemma~3]{li2020flconvergence}. Similarly, by expanding the squared norm in $D_3$ under expectation and applying Lemma~\ref{lem:quatization}, followed by Jensen's inequality and the definition of $\bs{\bar{w}}^t$, we obtain 
     \begin{equation}
        D_3 \leq -\mu\eta_t\sum_{k=1}^N \frac{1}{N}\norm{\wktq-\wkt}^2 -\mu\eta_t\norm{\bs{\bar{w}}^t-\bs w^*}^2. \label{eq:D3}
    \end{equation}
    Finally, substituting Eqs.~\eqref{eq:D2}, \eqref{eq:D1} and \eqref{eq:D3} into Eq.~\eqref{eq:A1_ub} yields the upper bound for $A_1$. Incorporating this result along with the bound for $A_3$ from Eq.~\eqref{eq:A3} into the expected value of Eq.~\eqref{eq:lemma3_non_expect}, and subsequently bounding the expected quantization error $\expec{\norm{\wktq-\wkt}^2}$ using Eq.~\eqref{eq:fsl_quant_err}, yields the final result stated in Lemma~\ref{lem:local_iter}.
    
\end{IEEEproof}

\subsection{Proof Completion} \label{app:induction}
To complete the proof of Theorem~\ref{thm:convergence}, we follow~\cite{zheng2020design} and define additional auxiliary variables for convenience: $\bs{\bar{u}}^t=\sum_{k=1}^N\bs{u}_k^t$, with $\bs{u}_k^t$ as defined in~\cite[Eq.~(66)]{kim2023greenfl}. Because our primary objective is to evaluate the convergence of the global model, we now restrict our analysis to the case where $t+1\in\mathcal{I}$.
As in \cite[Eq.~(67)]{kim2023greenfl}, we write
\begin{align}
    \norm{\bs{\bar{w}}^{t+1}-\bs{w}^*}^2 &= \underbrace{\norm{\bs{\bar{w}}^{t+1}-\bs{\bar{u}}^{t+1}}^2}_{E_1} + \underbrace{\norm{\bs{\bar{u}}^{t+1}-\bs{w}^*}^2}_{E_2} \nonumber \\
    &+ \underbrace{2\iprod{\bs{\bar{w}}^{t+1}-\bs{\bar{u}}^{t+1}}{\bs{\bar{u}}^{t+1}-\bs{w}^*}}_{E_3}.
\end{align}
Following~\cite{kim2023greenfl} and adopting the results in~\cite[Lemma~7]{zheng2020design} with slight modification to our quantization scheme, we have
\begin{align}
    \expec{\bs{\bar{w}}^{t+1}} &= \bs{\bar{u}}^{t+1}, \label{eq:wbar_ubar} \\
    \expec{\norm{\bs{\bar{w}}^{t+1}-\bs{\bar{u}}^{t+1}}^2} &\le \frac{4\dc IG^2}{K2^{2\qu}}, \label{eq:E1_ub}
\end{align}
since only the client-side submodel update is being quantized for uplink transmission. Then, $E_3$ evaluates to zero under expectation from~\eqref{eq:wbar_ubar}, and $E_1$ is bounded using Eq.~\eqref{eq:E1_ub}. To bound $E_2$, we expand the squared norm and write
\begin{align}
    E_2 &= \underbrace{\norm{\bs{\bar{u}}^{t+1}-\bs{\bar{v}}^{t+1}}^2}_{F_1} + \underbrace{\norm{\bs{\bar{v}}^{t+1}-\bs{w}^*}^2}_{F_2} \nonumber \\
    & + \underbrace{2\iprod{\bs{\bar{u}}^{t+1}-\bs{\bar{v}}^{t+1}}{\bs{\bar{v}}^{t+1}-\bs{w}^*}}_{F_3}.
\end{align}
As in~\cite{kim2023greenfl}, we adopt the results in~\cite[Lemma~4]{zheng2020design}:
\begin{align}
    \expec{\bs{\bar{u}}^{t+1}} &= \bs{\bar{v}}^{t+1}, \label{eq:ubar_vbar} \\
    \expec{\norm{\bs{\bar{v}}^{t+1}-\bs{\bar{u}}^{t+1}}^2} &\le 4\eta_t^2I^2G^2\frac{(N-K)}{K(N-1)}. \label{eq:F1_ub}
\end{align}
Then, under expectation $F_3$ evaluates to zero from~\eqref{eq:ubar_vbar}, and $F_1$ and $F_2$ are bounded using Eq.~\eqref{eq:F1_ub} and Lemma~\ref{lem:local_iter}, respectively. 
Consequently, we have
\begin{align}
     \expec{\norm{\bs{\bar{w}}^{t+1}-\bs{w}^*}^2} &\leq (1-\mu\eta_t)\expec{\norm{\bs{\bar{w}}^t-\bs{w}^*}^2}  \nonumber \\ & + \eta_t^3\psi'_3 + \eta_t^2\psi'_2 + \eta_t\psi_1, \label{eq:induction}
\end{align}
where \vspace{-1mm}
    \begin{IEEEeqnarray}{rCl}
        \psi_1 &=& (\alpha - \mu)\left( \frac{\dc}{2^{2\qc}} + \frac{\ds}{2^{2\qs}} \right) + \frac{2L\Gamma}{\alpha}, \label{eq:psi1_proof}
        \\
        \psi'_2 &=&  2L\Gamma + \sum_{k=1}^N \frac{\sigma_k^2}{N^2} + \frac{4I^2G^2(N-K)}{K(N-1)} + \frac{4\dc IG^2}{K2^{2\qu}}, \hspace{1mm} \label{eq:psi2_tag}
        \\
        \psi'_3 &=& 4\alpha(I-1)^2G^2. \label{eq:psi3_tag}
    \end{IEEEeqnarray}

Following the proof methodology in~\cite{zheng2020design}, we define $\Delta^{\hspace{-0.3mm}t}=\expec{\norm{\bs{\bar{w}}^t-\bs{w}^*}^2}$. With a diminishing learning rate $\eta_t = \frac{\beta}{t+\gamma}$, where $\beta>\frac{1}{\mu}$ and $\gamma>0$, such that $\eta_t\le2\eta_{t+I}$ and $\eta_1\le \frac{\alpha-L}{\alpha L}$ with $\alpha>L$, we next prove by induction that $\Delta^{\hspace{-0.3mm}t}\leq\frac{v}{t+\gamma}+\frac{\psi_1}{\mu}$,
where \vspace{-1mm}
\begin{IEEEeqnarray}{rCl}
    v &=& \max\left\{ \frac{\beta^2\psi_2}{(\beta\mu-1)}, \ (\gamma+1)\Delta^{\hspace{-0.3mm}1} \right\}, \\
    \psi_2 &=& \psi'_2 + \frac{\beta}{\gamma+1}\psi'_3.
\end{IEEEeqnarray}
The definition of $v$ ensures that our claim holds for $t=1$. Assuming the conclusion holds for some $t\ge1$, it follows that
\begin{IEEEeqnarray*}{rCl}
     \Delta^{\hspace{-0.3mm}t+1} &\leq& (1-\mu\eta_t) \Delta^{\hspace{-0.3mm} t} +\eta_t^3\psi'_3 + \eta_t^2\psi'_2 + \eta_t\psi_1 
    \\
    &\leq& \paren{\!1-\frac{\beta\mu}{t+\gamma}}\!\hspace{-0.5pt}\paren{\!\frac{v}{t+\gamma}+\frac{\psi_1}{\mu}\!} \!+\hspace{-0.6pt}\frac{\beta^2}{(t+\gamma)^2}\psi_2 +\hspace{-1pt}\frac{\beta}{t+\gamma}\psi_1
    \\
    &=& \frac{t+\gamma-1}{(t+\gamma)^2}v + \frac{\psi_1}{\mu} + \sqb{\frac{\beta^2}{(t+\gamma)^2}\psi_2 -\frac{(\beta\mu-1)}{(t+\gamma)^2}v}
    \\
    &\leq& \frac{t+\gamma-1}{(t+\gamma)^2}v + \frac{\psi_1}{\mu} \leq \frac{v}{(t+\gamma+1)} + \frac{\psi_1}{\mu} ,
\end{IEEEeqnarray*}
where the final inequalities follow from $v\geq\frac{\beta^2\psi_2}{\beta\mu-1}$, and $\frac{t+\gamma-1}{(t+\gamma)^2} \leq \frac{1}{t+1+\gamma}$. 

To establish Theorem~\ref{thm:convergence}, we first apply $\max\{a,b\}\leq a+b\,$ to decouple the terms in $v$. Furthermore, by jointly applying the $\mu$-strong convexity and the bounded SG variance detailed in Assumption~\ref{asm:loss_sg}, it can be shown that $\expec{\norm{\bs{\bar{w}}^1-\bs{w}^*}^2} \le \frac{G^2}{\mu^2}$. Substituting this into the bound, we get
\begin{equation}
    \expec{\norm{\bs{\bar{w}}^t-\bs{w}^*}^2} \leq \frac{1}{t+\gamma}\sqb{\frac{\beta^2\psi_2}{(\beta\mu-1)} +\frac{(\gamma+1)G^2}{\mu^2}}+\frac{\psi_1}{\mu \rule[-1.2ex]{0pt}{0pt}}.
\end{equation}
Finally, we apply the L-smoothness property of the loss function, where $\expec{F(\bs{\bar{w}}^t)-F(\bs{w}^*)} \le \frac{L}{2}\expec{\norm{\bs{\bar{w}}^t-\bs{w}^*}^2}$. By setting $\beta=\frac{2}{\mu}$ and $\gamma=\max\big(8\frac{L}{\mu}-1,I\big)$ as in~\cite{zheng2020design}, and evaluating the bound at a global round $T$ equivalent to $TI$ local iterations, we obtain Theorem~\ref{thm:convergence}.


\section{Proof of Corollary~\ref{cor:optimal_alpha}} \label{app:cor_1_proof}
Let $h(\alpha)$ denote the right-hand side of Eq.~\eqref{eq:converge}. We seek to minimize $h(\alpha)$ subject to two constraints that were induced from the proof in Appendix~\ref{app:local_iter}: (i) $\alpha > L$, and (ii) $\eta_t =\frac{\beta}{t+\gamma} \leq \frac{\alpha-L}{\alpha L}$ for all $t \geq 1$. The second constraint is equivalent to evaluating it at $t=1$ and substituting $\beta = \frac{2}{\mu}$ which yields $\alpha \geq \frac{\mu L(\gamma+1)}{\mu(\gamma+1)-2L}$. 

By taking the first derivative of $h(\alpha)$ with respect to $\alpha$ and setting $h'(\alpha) = 0$, we find the unconstrained minimum $\alpha_0$. Using the substitution variables $X$, $Y$, and $Z$ defined in Corollary~\ref{cor:optimal_alpha}, this algebraic step directly yields the square root term presented in Eq.~\eqref{eq:alpha_star}. 

We then take the second derivative to verify convexity. Since $h''(\alpha) = \frac{2L^2\Gamma}{\alpha^3\mu} > 0$ strictly holds for all $\alpha > 0$, the function $h(\alpha)$ is strictly convex in our domain of interest. Therefore, the optimal parameter $\alpha^*$ is simply the unconstrained minimum projected onto the feasible domain bounds, which produces the exact expression in Eq.~\eqref{eq:alpha_star}.

\section{Proof of Lemma~\ref{lem:exp_max}} \label{app:lem_2_proof}
Let $u_{(1)} \le u_{(2)} \le \dots \le u_{(N)}$ denote the ordered statistics of the finite population. Under uniform sampling without replacement, the probability mass function for the sample maximum $U = \max_{k \in \mathcal{S}} \{u_k\}$ being the $k$-th order statistic readily follows from finite-population order statistics \cite[Sec.~3.7]{arnold2008first} as
\begin{equation}
    \mathbb{P}(U = u_{(k)}) = \frac{\binom{k-1}{K-1}}{\binom{N}{K}}.
\end{equation}
Computing the expectation over all possible values of $U$ yields $\expec{U} = \sum_{k=K}^{N}u_{(k)}\mathbb{P}(U = u_{(k)})$, where the summation starts at $k=K$ since at least $K-1$ smaller or equal elements must exist in the population to complete the sample of size $K$. This concludes the proof of Eq.~\eqref{eq:expected_max}.

 \end{appendices}

\bibliographystyle{ieeetr}
\bibliography{utils/references}

\end{document}